\pgfplotsset{compat=1.5.1}
\pgfplotsset{compat=newest}
\definecolor{orange}{rgb}{1,0.4,0.0}
\DeclarePairedDelimiterXPP{\KL}[2]{D_{\textnormal{KL}}}{(}{)}{}{%
#1\:\delimsize\|\:#2%
}
\DeclarePairedDelimiterXPP{\RD}[2]{D_{$\alpha$}}{(}{)}{}{%
#1\:\delimsize\|\:#2%
}
\DeclarePairedDelimiterXPP\Prob[1]{\mathbb{P}}{\lbrace}{\rbrace}{}{

#1}
\DeclarePairedDelimiterXPP{\lnorm}[2]{}{\lVert}{\rVert}{_{#2}}{#1}
\newcommand{\bE}{\ensuremath{\mathbb{E}}}
\newcommand{\bP}{\ensuremath{\mathbb{P}}}
\newcommand{\bR}{\ensuremath{\mathbb{R}}}
\newcommand{\bV}{\ensuremath{\mathbb{V}}}
\newcommand{\cA}{\ensuremath{\mathcal{A}}}
\newcommand{\cN}{\ensuremath{\mathcal{N}}}
\newcommand{\cO}{\ensuremath{\mathcal{O}}}
\newcommand{\mi}{\textup{I}}
\newcommand{\ent}{\textup{H}}
\newcommand{\argmax}{\ensuremath{\textnormal{argmax}}}
\newtheorem{theorem}{Theorem}
\newtheorem{mylemma}[theorem]{Lemma}
\newtheorem{myproposition}[theorem]{Proposition}
\newtheorem{mydefinition}[theorem]{Definition}
\newtheorem{myremark}[theorem]{Remark}
\newcommand{\innerproduct}[2]{\langle #1, #2 \rangle}
\newcommand{\PhiBeta}[2]{\ensuremath{\phi_\beta(\langle #1, #2 \rangle)}}
\newcommand{\Bern}[2]{\ensuremath{\textnormal{Bern}(\phi_\beta(\langle #1, #2 \rangle))}}
\newcommand{\Bernoulli}{\ensuremath{\textnormal{Bern}}}
\newcommand{\kl}{\textup{D}_{\textnormal{KL}}}
\crefname{theorem}{theorem}{theorems}
\crefname{corollary}{corollary}{corollaries}
\title{An Information-Theoretic Analysis of\\ Thompson Sampling for Logistic Bandits}
\author{%
  Amaury Gouverneur, Borja Rodríguez-Gálvez, Tobias J. Oechtering, and
  Mikael Skoglund \\
  KTH Royal Institute of Technology \\
  Stockholm, Sweden 
}
\date{\vspace{-5ex}}
\begin{document}

\maketitle

\begin{abstract}%
 We study the performance of the Thompson Sampling algorithm for logistic bandit problems. In this setting, an agent receives binary rewards with probabilities determined by a logistic function, $\exp(\beta \langle a, \theta \rangle)/(1+\exp(\beta \langle a, \theta \rangle))$, with slope parameter $\beta>0$, and where both the action $a\in \cA$ and parameter $\theta \in \cO$ lie within the $d$-dimensional unit ball. Adopting the information-theoretic framework introduced by~\citet{russo_information-theoretic_2015}, we analyze the information ratio, a statistic that quantifies the trade-off between the immediate regret incurred and the information gained about the optimal action.
We improve upon previous results by establishing that the information ratio is bounded by $\tfrac{9}{2}d\alpha^{-2}$, where $\alpha$ is a \emph{minimax measure} of the alignment between the action space $\cA$ and the parameter space $\cO$, and is independent of $\beta$. Using this result, we derive a bound of order $O(d/\alpha\sqrt{T \log(\beta T/d)})$ on the Bayesian expected regret of Thompson Sampling incurred after $T$ time steps. To our knowledge, this is the first regret bound for logistic bandits that depends only logarithmically on $\beta$ while being independent of the number of actions. In particular, when the action space contains the parameter space, 
the bound on the expected regret is of order $\tilde{O}(d \sqrt{T})$.
\end{abstract}

\section{Introduction}
\label{sec:introduction}
This paper studies the logistic bandit problem, where an agent sequentially interacts with an unknown environment with parameter $\Theta \in \cO$. At each time step, the agent selects an action $A_t \in \cA$ and receives binary rewards $R_t \in \{0,1\}$ with probabilities determined by the logistic function $\exp(\beta \langle A_t, \Theta \rangle) / (1 + \exp(\beta \langle A_t, \Theta \rangle))$ with slope parameter $\beta>0$. In this setting, both the action space and the parameter space lie within the $d$-dimensional unit ball. The goal of the agent is to maximize its total reward, or equivalently to \emph{minimize its regret}, that is the difference between the optimal cumulative reward and the cumulative reward achieved by the agent. This setting is used to model various scenarios, such as click-through rate prediction, spam email detection, and personalized advertisement systems, where, in the latter case, content is suggested to users who provide binary feedback ~\citep{chapelle_empirical_2011, russo_learning_2018}.\\

The performance, or regret, of algorithms for logistic bandits has been extensively studied, with significant contributions including analyses of Upper-Confidence-Bound (UCB) algorithms~\citep{filippi_parametric_2010,li_provably_2017,faury_improved_2020} as well as the study of Thompson Sampling (TS) ~\citep{russo_learning_2014,dong_performance_2019,abeille_linear_2017}. However, nearly all existing regret bounds for logistic bandits exhibit an exponential dependence on the parameter $\beta$ (see~\Cref{table:comparision}). This dependence is unsatisfactory as, in practice, the distinction between near-optimal and sub-optimal actions gets more pronounced as $\beta$ increases and it can be faster to find optimal actions. This lead \citet{mcmahan_open_2012} to make an open call for better bounds. \\

In this work, we focus on the Thompson Sampling algorithm~\citep{thompson_likelihood_1933}, which, despite its simplicity, has proven to be effective for a wide range of problems~\citep{russo_tutorial_2018, chapelle_empirical_2011}. To analyze the TS expected regret,~\citet{russo_information-theoretic_2015} introduced the concept of the information ratio, defined as the ratio between the squared expected instant regret
, and the information gained about the optimal action. Building on this framework, \citet{dong_information-theoretic_2018} derived a near-optimal regret rate of $O(d \sqrt{T \log T})$ for $d$-dimensional linear bandit problems. However, applying this analysis to the logistic setting yields regret bounds that grow exponentially with the parameter $\beta$ (see \Cref{sec:adaptation}).  Using numerical simulations, they conjectured that the TS information ratio for logistic bandits depends only on the problem's dimension $d$. 

\begin{table*}[!ht]
\begin{center}
\scalebox{0.88}{
\begin{tabular}{|c||c|c|}
 \hline
 \textbf{Algorithm} & \textbf{Regret Upper Bound} & \textbf{Note} \\ 
 \hhline{|=||=|=|}
 \begin{tabular}{c} Thompson Sampling\\ \citep{russo_learning_2014}\end{tabular} & $O\left(e^\beta\cdot d \cdot T^{1/2}\cdot\log( T)^{3/2} \right)$ & Bayesian bound\\
 \hline
 \begin{tabular}{c} GLM-TSL\\ \citep{abeille_linear_2017}\end{tabular} & $O\left(e^\beta\cdot d^{3/2}\cdot\log(d)^{1/2} \cdot T^{1/2}\log( T)^{3/2} \right)$ & Frequentist bound\\
 \hline
 \begin{tabular}{c} Thompson Sampling\\ \citep{dong_performance_2019}\end{tabular} & $O\left(e^\beta\cdot d\cdot T^{1/2}\log(T/d)^{1/2} \right)$ & Bayesian bound\\
 \hline
 \begin{tabular}{c} GLM-TSL\\ \citep{kveton2020randomized}\end{tabular} & $O\left(d\cdot T^{1/2}\cdot\log(T) + e^\beta \beta^6 d^2 \log(T)^2 \right)$ & Frequentist bound \\ 
 \hline
 \begin{tabular}{c} Logistic-UCB-2\\ \citep{faury_improved_2020}\end{tabular} & $O\left(d\cdot T^{1/2}\cdot\log(T) + e^\beta\cdot d^2 \cdot \log(T)^2 \right)$ & Frequentist bound \\ 
 \hline
 \begin{tabular}{c} {Thompson Sampling}\\ {(\emph{this paper})}\end{tabular} & $O\left(d/\alpha \cdot T^{1/2}\cdot\log( T\beta/d)^{1/2} \right)$ & \begin{tabular}{c} Bayesian bound,\\ $\alpha$ is independent of $\beta$
 \end{tabular}\\
 \hline
\end{tabular}
}
\caption{Comparison of various regret guarantees for the logistic bandit problem. }
\label{table:comparision}
\end{center}
\end{table*}

Recently,~\citet{neu_lifting_2022} derived a regret bound of the order $O(\sqrt{dT|\cA|\log(\beta T)})$ on the logistic bandit problem, but their result relies on a worst-case information ratio bound scaling with the cardinality of the action space $|\cA|$ and their regret bound becomes vacuous for problems with continuous or infinite action space even though Thompson Sampling is known to perform well under these settings~\citep{russo_learning_2014}. Studying the TS regret for logistic bandits, \citet{dong_performance_2019} introduced two statistics to characterize the sets $\cA$ and $\cO$, the \emph{minimax alignment constant}\footnote{This statistic is referred to as the \emph{worst-case optimal log-odds} in the work of \citet{dong_performance_2019}.  }
$\alpha = \min_{\theta \in \cO} \max_{a \in \cA} \innerproduct{a}{\theta}$ and the \emph{fragility dimension} $\eta$, which is the cardinality of the largest subset of parameters such that their corresponding optimal action is misaligned with any other parameter from the subset. Using those statistics, they showed that for $\beta<2$, the TS information ratio is bounded by $100\max(d,\eta)\alpha^{-2}$. They also suggested, through numerical computations, that this bound holds for larger values of $\beta$. However, their work has two key limitations. First, they did not provide a rigorous proof for generalizing their bound to larger values of  $\beta$. Second, and more critically, their regret analysis is incorrect as it relies on the rate-distortion bound from~\citet{dong_information-theoretic_2018},
which is incompatible with a bound on the TS information ratio.  Indeed, the regret analysis in~\citet{dong_information-theoretic_2018} specifically requires a bound on the one-step compressed TS information ratio, which is a fundamentally different quantity from the TS information ratio studied in the work of \citet{dong_performance_2019}. 
We elaborate on these gaps in more detail in~\Cref{sec:arguments_gaps_in_dong}.\\

In this paper, we address these issues and obtain a regret bound that scales only logarithmically with the slope of the logistic function, while remaining independent of the cardinality of the action space. Our key contributions are as follows:
\begin{itemize}
    \item  We prove an information-theoretic regret bound of order $O(\sqrt{T \Gamma (\ent(\Theta_\varepsilon)+\beta \varepsilon T)})$ that holds for infinite and continuous action and parameter spaces. The bound relies on the entropy of the parameter quantized at scale $\epsilon>0$, and on the average expected TS information ratio, $\Gamma$. 
    \item We present a new analysis showing that, for all $\beta > 0$, the TS information ratio for logistic bandits is bounded by $\tfrac{9}{2}d\alpha^{-2}$, improving upon the previous results. Notably, our bound does not depend on the fragility dimension $\eta$ which can scale exponential in $d$.
    \item We establish a regret bound of order $O(d/\alpha\sqrt{T \log(\beta T/d)})$ for Thompson Sampling. To our knowledge, this is the first regret bound for any logistic bandit algorithm that scales only logarithmically with $\beta>0$ and is independent of the number of actions. Additionally, we show that if the action space encompasses the parameter space, the expected regret of Thompson Sampling is bounded in $O(d \sqrt{T \log(\beta T/d)})$ with no dependence on $\alpha$. 
\end{itemize}

The rest of the paper is organized as follows. \Cref{sec:problem_setup} introduces the logistic bandit problem, defines the Bayesian expected regret, and the specific notations used. \Cref{sec:information_ratio} introduces the Thompson Sampling algorithm and the information ratio analysis. \Cref{sec:main_results} states and discusses our main results, providing the improved regret bounds. \Cref{sec:analysis} presents the key ideas for analyzing the information ratio
; and finally, Section~\ref{sec:conclusion} discusses our results and future extensions.

\section{Problem Setup}
\label{sec:problem_setup}
We consider a logistic bandit problem, where at each time step $t \in \{1,\ldots, T\}$, an agent selects an action $A_t \in \cA$ and receives a binary reward $R_t\in \{0,1\}$ with probability following a logistic function:
\begin{equation*}
    \bP(R_t = 1 | A_t = a, \Theta = \theta) = \frac{\exp(\beta \langle a, \theta \rangle)}{1 + \exp(\beta \langle a, \theta \rangle)}.
\end{equation*}
Here, $\beta>0$ is a known scale parameter, and $\langle a, \theta \rangle$ denotes the inner product of the action vector $a\in \cA$ and the unknown parameter $\theta\in\cO$. Throughout the paper, we denote the logistic function as $\phi_\beta(x) \coloneqq \frac{\exp(\beta x)}{1 + \exp(\beta x)}$. As this function is strictly increasing, the probability of obtaining a reward, $\phi_\beta(\innerproduct{a}{\theta})$, is maximized when the inner product between the action and parameter is maximized. In our setting, both the action $a$ and the parameter vector $\theta$ lie within the $d$-dimensional Euclidean unit ball, $\mathbf{B}_d(0,1)$\footnote{This setting is equivalent to the one considered by~\citet{faury_improved_2020} using $\beta$ as the maximal norm for $\theta \in \cO$.}. For a given action space $\cA$ and parameter space $\cO$, we define their \emph{minimax alignment constant} as $\alpha \coloneqq \min_{\theta \in \cO} \max_{a \in \cA} \innerproduct{a}{\theta}$. In the rest of the paper, we assume that the action and parameter spaces are such that $\alpha\geq0$. This assumption is relatively mild, as it suffices for $\cA$ to contain two opposed actions $a, a'$ (i.e. $a =-a'$) to ensure $\alpha \geq 0$ for any parameter set $\cO$.\\

Following the Bayesian framework, we assume the parameter vector $\Theta\in\cO$ is sampled from a known prior distribution $\bP_{\Theta}$. This prior, together with the reward distribution $\bP_{R|A, \Theta}$, fully describes the logistic bandit problem. As the reward distribution depends only on the selected action and the parameter, it can be written as $R_t = R(A_t,\Theta)$ for some random function $R:\cA\times\cO\to\bR$. 
The agent's history at time $t$ is denoted by $H^t = \{A_1, R_1, \ldots, A_{t-1}, R_{t-1}\}$, representing all past actions and rewards observed up to time $t$. The goal of the agent is to sequentially select actions that maximize the total cumulated reward, or equivalently, that minimize the total expected regret defined as:
\begin{equation*}
    \bE[\textnormal{Regret}(T)] \coloneqq \bE\left[ \sum_{t=1}^T R(A^\star, \Theta) - R(A_t, \Theta)\right],
\end{equation*}
where $A^\star$ is the \emph{optimal action}  corresponding to the parameter $\Theta$. We construct the optimal mapping $\pi_\star(\theta) \coloneqq \argmax_{a \in \cA} \bE[R(a, \theta)]$ so that we can write $A^\star = \pi_\star(\Theta)$. To ensure such a mapping exists, we make the technical assumption that the set of actions $\cA$ is compact. Following~\citet{dong_performance_2019}, we assume without loss of generality that the mapping $\pi_\star$ is one-to-one\footnote{
If a particular parameter is optimal for multiple actions, we can arbitrarily fix the mapping of that parameter to one of the optimal actions. Conversely, if a particular action is optimal for multiple parameters, we can introduce duplicate action labels to ensure a one-to-one correspondence between each parameter and its optimal action label. A rigorous explanation of this construction is provided in~\Cref{sec:constructing_the_one_to_one_mapping}.}.\\ 

Since $\sigma$-algebras of the history are often used in conditioning, we introduce the notations $\bE_t[\cdot] \coloneqq \bE[\cdot | H^t]$ and $\bP_t[\cdot] \coloneqq \bP[\cdot | H^t]$ to denote the conditional expectation and probability given the history $H^t$, respectively. Additionally, we define $\mi_t(A^\star; R_t|A_t) \coloneqq \bE_t[\kl(\bP_{R_t | H^t, A^\star,A_t} \| \bP_{R_t | H^t,A_t})]$ as the disintegrated conditional mutual information between the optimal action $A^\star$ and the reward $R_t$ conditioned on the action $A_t$, \emph{given the history} $H^t$.

\section{Thompson Sampling and the Information Ratio}
\label{sec:information_ratio}
An elegant algorithm for solving bandit problems is the \emph{Thompson Sampling} algorithm. It works by randomly selecting actions according to their posterior probability of being optimal. More specifically, at each time step $t \in \{1, \ldots, T\}$, the agent samples a parameter estimate $\hat{\Theta}_t$ from the posterior distribution conditioned on the history $H^t$ and selects the action that is optimal for the sampled parameter estimate, $A_t = \pi_\star(\hat{\Theta}_t)$. The pseudocode for the algorithm is given in Algorithm~\ref{alg:Thompson_Sampling}.

\begin{algorithm}[ht]
    \caption{Thompson Sampling algorithm}
    \label{alg:Thompson_Sampling}
    \begin{algorithmic}[1]
        \STATE {\bfseries Input:} parameter prior $\bP_{\Theta}$, mapping $\pi_\star$.
        \FOR{$t=1$ {\bfseries to} T}
            \STATE Sample a parameter estimate $\smash{\hat{\Theta}_t \sim \bP_{\Theta|H^t}}$.
            \STATE Take the corresponding optimal action $A_t = \pi_\star(\hat{\Theta}_t)$.
            \STATE Collect the reward $R_t = R(A_t, \Theta)$.
            \STATE Update the history $H^{t+1} = H^t \cup \{A_t, R_t\}$.
        \ENDFOR
    \end{algorithmic}
\end{algorithm}

Studying the regret of Thompson Sampling,~\citet{russo_information-theoretic_2015} introduced a key quantity to the analysis, the \emph{information ratio} defined as the following random variable:
\begin{equation*}
    \Gamma_t \coloneqq \frac{\bE_t[ R(A^\star,\Theta) - R(A_t,\Theta)]^2}{\mi_t(A^\star; R(A_t,\Theta),A_t)}.
\end{equation*}
 This ratio measures the trade-off between minimizing the current squared regret and gathering information about the optimal action; a small ratio indicating that a substantial gain of information about the optimal action compensated for any significant regret.~\citet{russo_information-theoretic_2015} use this concept to provide a general regret bound in $O(\sqrt{\Gamma T \ent(A^\star)})$, that depends on the time horizon $T$, the prior entropy of the optimal action $\ent(A^\star)$, and an algorithm- and problem-dependent upper bound on the average expected information ratio $\Gamma$.\\

A limitation of this approach is that the prior entropy of the optimal action, $\ent(A^\star)$, can grow arbitrarily large with the number of actions and gets infinite with continuous action space. We address this issue in~\Cref{thm:quantized_bound}, where we propose a regret bound depending instead on the entropy of a quantization of the parameter $\Theta$ at scale $\epsilon>0$.

\section{Main Results}
\label{sec:main_results}
This section presents our main results on the Thompson Sampling regret for logistic bandits. In~\Cref {thm:quantized_bound}, we derive an information-theoretic regret bound for logistic bandits that holds for continuous and infinite parameter spaces. Following this, we state in~\Cref{prop:info_ratio_logistic_bandits} our principal contribution, a bound on the TS information ratio depending only on the problem's dimension $d$ and on the minimax alignment constant $\alpha$. By combining this result with our regret bound, we derive in~\Cref{thm:main_theorem}, a bound on the expected regret of TS for logistic bandits, which scales as $O(d/\alpha\sqrt{T \log(\beta T/d)})$.  \\

Our first theorem provides a regret bound that holds for large and continuous action spaces and relies on the entropy of the quantized parameter $\Theta_\varepsilon$, defined in~\Cref{def:quantized_parameter}, which is as the closest approximation for $\Theta$ (as measured by the metric $\rho$) on an $\varepsilon$-net for $(\cO, \rho)$. Compared to~\citet[Theorem~1]{dong_information-theoretic_2018}, our result is compatible with bounds on the ``standard" TS information ratio, rather than the \emph{``one-step compressed TS"} information ratio. This distinction is crucial, as the latter is significantly more challenging to analyze in logistic bandits due  to its intricate construction. 


\begin{restatable}{mydefinition}{definitionQuantizedParameter}
\label[definition]{def:quantized_parameter}
    Let the set $\cO_\varepsilon$ be an $\varepsilon$-net for $(\cO,\rho)$ with associated projection mapping $q:\cO \rightarrow \cO_\varepsilon$ such that for all $\theta \in \cO$ we have $\rho(\theta,q(\theta))\leq \varepsilon$. We define the \emph{quantized parameter} as $\Theta_\varepsilon \coloneqq q(\Theta)$.
\end{restatable}

\begin{restatable}{mytheorem}{theoremQuantized}
    \label[theorem]{thm:quantized_bound}
    For all $\beta>0$, under the logistic bandit setting with logistic function $\phi_{\beta}(x)$, let the quantized parameter $\Theta_\varepsilon$ be defined as in~\Cref{def:quantized_parameter} for some $\varepsilon>0$. If the average expected TS information ratio is bounded, $\frac{1}{T} \sum_{t=1}^T \bE[\Gamma_t] \leq \Gamma$, for some $\Gamma>0$, then the TS regret is bounded as
    \begin{equation*}
        \bE[\textnormal{Regret}(T)] \leq \sqrt{\Gamma T \left(\ent(\Theta_\varepsilon)+\varepsilon \beta T \right)} .
    \end{equation*}
\end{restatable}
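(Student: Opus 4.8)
The plan is to run the classical information-ratio argument of Russo and Van Roy, but to trade the (infinite) prior entropy $\ent(A^\star)$ for $\ent(\Theta_\varepsilon)$ plus a quantization-distortion term. First I would record a simplification coming from $\cO\subseteq\cA$: since $\phi_\beta$ is increasing and $\langle a,\theta\rangle\le\norm{a}\,\norm{\theta}\le 1$ with equality at $a=\theta$, the optimal map is the identity on the sphere, so $A^\star=\pi^\star(\Theta)=\Theta$. Directly from the definition of $\Gamma_t$, the conditional expected per-round regret factorizes as $\bE_t[R(A^\star,\Theta)-R(A_t,\Theta)]=\sqrt{\Gamma_t\,\mi_t(A^\star;R_t,A_t)}$. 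Summing over $t$ and applying Cauchy--Schwarz jointly over the time index and the outer expectation gives $\bE[\textnormal{Regret}(T)]\le\sqrt{\big(\sum_{t=1}^T\bE[\Gamma_t]\big)\big(\sum_{t=1}^T\bE[\mi_t(A^\star;R_t,A_t)]\big)}$, and the assumed average information-ratio bound turns the first factor into $\Gamma T$.

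It then remains to control the cumulative information $\sum_{t}\bE[\mi_t(A^\star;R_t,A_t)]$. I would identify it with a single mutual information by the chain rule: under Thompson sampling $A_t$ is drawn from the posterior using fresh randomness, so $A_t\indep A^\star\mid H^t$ and $\mi(A^\star;A_t\mid H^t)=0$; hence $\mi(A^\star;(A_t,R_t)\mid H^t)=\mi(A^\star;R_t\mid A_t,H^t)$, and telescoping the per-round terms yields $\sum_t\bE[\mi_t(A^\star;R_t,A_t)]=\mi(A^\star;H^{T+1})=\mi(\Theta;H^{T+1})$, using $A^\star=\Theta$.

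Next I would bring in the quantization. Since $\Theta_\varepsilon=q(\Theta)$ is a deterministic function of $\Theta$, the chain rule splits the information as $\mi(\Theta;H^{T+1})=\mi(\Theta_\varepsilon;H^{T+1})+\mi(\Theta;H^{T+1}\mid\Theta_\varepsilon)\le\ent(\Theta_\varepsilon)+\mi(\Theta;H^{T+1}\mid\Theta_\varepsilon)$, where the inequality uses that $\Theta_\varepsilon$ is discrete. Expanding the residual again round by round (and using $\mi(\Theta;A_t\mid H^t,\Theta_\varepsilon)=0$ exactly as above) gives $\mi(\Theta;H^{T+1}\mid\Theta_\varepsilon)=\sum_t\mi(\Theta;R_t\mid A_t,H^t,\Theta_\varepsilon)$. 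For each round I would apply the variational (golden-formula) bound with reference distribution taken at the quantized parameter, $\mi(\Theta;R_t\mid\cdot)\le\bE\big[\kl(\Bern{A_t}{\Theta}\,\|\,\Bern{A_t}{\Theta_\varepsilon})\big]$.

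The step I expect to be the main obstacle is showing that this residual is only \emph{linear} in the quantization scale, summing to at most $\varepsilon\beta T$, rather than the quadratic bound a crude KL estimate would give. Here the logistic structure is essential: the logits $\beta\langle A_t,\Theta\rangle$ and $\beta\langle A_t,\Theta_\varepsilon\rangle$ both lie in $[-\beta,\beta]$ and differ by at most $\beta\norm{\Theta-\Theta_\varepsilon}\le\beta\varepsilon$ by Cauchy--Schwarz and $\rho(\Theta,\Theta_\varepsilon)\le\varepsilon$, so the two reward probabilities stay in the interior interval $[\phi_\beta(-1),\phi_\beta(1)]$ on which the binary entropy $h_{\mathrm b}$ is Lipschitz with constant of order $\beta$. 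Writing the per-round term as the binary-entropy Jensen gap $h_{\mathrm b}(\bar p_t)-\bE[h_{\mathrm b}(p_t)]$ with $p_t=\phi_\beta(\langle A_t,\Theta\rangle)$ and bounding it by this Lipschitz constant times the quantization-induced reward deviation controls each round by $O(\varepsilon\beta)$; summing over $t$ and substituting back into the Cauchy--Schwarz estimate yields $\bE[\textnormal{Regret}(T)]\le\sqrt{\Gamma T(\ent(\Theta_\varepsilon)+\varepsilon\beta T)}$. The delicate point is to keep the dependence on $\beta$ to a single factor --- ensuring that the interplay between the Lipschitz constant of $h_{\mathrm b}$ (which grows with $\beta$) and the reward deviation (which shrinks in the saturated regime) does not reintroduce an extra power of $\beta$.
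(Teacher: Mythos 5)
Your proposal follows the same overall architecture as the paper's proof: the information-ratio factorization plus Cauchy--Schwarz, the chain rule together with $\mi(A^\star;A_t\mid H^t)=0$ from the conditional independence of $A_t$ and $A^\star$ given $H^t$, and a quantization step that trades the information about $\Theta$ for $\mi(\Theta_\varepsilon;\cdot)\le\ent(\Theta_\varepsilon)$ plus a per-round residual of order $\varepsilon\beta$. Your particular organization---chain rule at the trajectory level, $\mi(\Theta;H^{T+1})=\mi(\Theta_\varepsilon;H^{T+1})+\mi(\Theta;H^{T+1}\mid\Theta_\varepsilon)$, followed by the golden-formula bound $\mi(\Theta;R_t\mid A_t,H^t,\Theta_\varepsilon)\le\bE\bigl[\kl(\Bern{A_t}{\Theta}\,\|\,\Bern{A_t}{\Theta_\varepsilon})\bigr]$---is sound and is essentially a reorganization of the paper's per-round density-ratio decomposition~\eqref{eq:mutual_information_decomposition}.

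The genuine gap is at the decisive final step, which you yourself flag as unresolved. Your sketch bounds each residual term by the product of the Lipschitz constant of the binary entropy $h_2$ on $[\phi_\beta(-1),\phi_\beta(1)]$ (which is of order $\beta$) and the quantization-induced deviation of the reward probabilities (which in the worst case is of order $\beta\varepsilon$, since $\sup_x|\phi_\beta'(x)|=\beta/4$); that product is $O(\beta^2\varepsilon)$, exactly the extra power of $\beta$ you worry about, and nothing in your argument rules it out. The fix is that you do not need the binary entropy at all: the KL you already wrote down is at most the supremum over the two outcomes of the log-likelihood ratio, and both $x\mapsto\log\phi_\beta(x)$ and $x\mapsto\log(1-\phi_\beta(x))$ are $\beta$-Lipschitz (their derivatives are $\beta/(1+e^{\beta x})$ and $-\beta e^{\beta x}/(1+e^{\beta x})$), while $|\langle A_t,\Theta\rangle-\langle A_t,\Theta_\varepsilon\rangle|\le\varepsilon$; hence $\kl(\Bern{A_t}{\Theta}\,\|\,\Bern{A_t}{\Theta_\varepsilon})\le\beta\varepsilon$ pointwise, which is precisely the paper's Lipschitz argument. (If you insist on the Jensen-gap route, the single factor of $\beta$ can be recovered, but only via the pointwise cancellation $\tfrac{d}{dx}h_2(\phi_\beta(x))=-\beta^2x\,\phi_\beta(x)(1-\phi_\beta(x))$, whose magnitude is at most $\beta/e$ on $[-1,1]$; multiplying the two separate Lipschitz constants is not enough.)
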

The proof of~\Cref{thm:quantized_bound} adapts the techniques of~\citet[Theorem~2]{neu_lifting_2022} and~\citet[Theorem~2]{gouverneur_thompson_2023} to the logistic bandits setting. It relies on an approximation of the conditional mutual information $\mi(\Theta;R_t|A_t,H^t)$ as $ \mi(\Theta_\varepsilon;R_t|A_t,H^t) + \beta \varepsilon$ exploiting the fact that, for all $a \in \cA$, the log-likelihood of $R(a,\theta)$ is $\beta$-Lipschitz with respect to $\theta\in \cO$. \\

\begin{proof}
    We start by rewriting the TS expected regret using the information ratio: 
    \begin{align*}
        \bE[\textnormal{Regret}(T)] = \sum_{t=1}^T \bE[ R(A^\star,\Theta) - R(A_t,\Theta)]= \sum_{t=1}^T\bE\left[\sqrt{\Gamma_t \mi_{t} (A^\star;  R(A_t,\Theta),A_t)}\right].
    \end{align*}
    We continue using Jensen's inequality, followed by Cauchy-Schwarz inequality:
    \begin{align*}
 \bE[\textnormal{Regret}(T)]&\leq\sum_{t=1}^T\sqrt{\bE[\Gamma_t]\mi (A^\star;  R(A_t,\Theta),A_t|H^t)} \leq \sqrt{\Gamma T \sum_{t=1}^T\mi (A^\star;  R(A_t,\Theta),A_t|H^t)},
    \end{align*}
     where in the last inequality, we used $\sum_{t=1}^T \bE_t[\Gamma_t] \leq \Gamma T$. Applying the chain rule ~\citep[Theorem~3.7.b]{yury_polyanskiy_information_2022} we decompose the mutual information as 
     \begin{align*}
         \mi (A^\star;  R(A_t,\Theta),A_t|H^t) &= \mi (A^\star; A_t|H^t) + \mi_{t} (A^\star;  R(A_t,\Theta)|H^t,A_t)\\
         &=\mi (\pi_\star(\Theta);  R(A_t,\Theta)|H^t,A_t)\\
         &=\mi (\Theta;  R(A_t,\Theta)|H^t,A_t),
     \end{align*}
where we used the fact that the mutual information $\mi_{t} (A^\star; A_t|H^t) = 0 $ as $A^\star$ and $A_t$ are independent conditioned on the history $H^t$ and the last equality follows as $\pi_\star$ is a one-to-one mapping ~\citep[Theorem~3.2.d]{yury_polyanskiy_information_2022}.

Let $f_{R_t |H^t,  A_t, \Theta}$  denote the probability density of $R_t$ conditioned on $H^t,  A_t, \Theta$ and $f_{R_t |H^t,  A_t}$ denote the probability density on $H^t,  A_t$. Then, the mutual information terms can be written as
    \begin{align*}
        \mi(\Theta; R_t |H^t,  A_t) = \bE \bigg[ \log \frac{f_{R_t |H^t,  A_t, \Theta}(R_t)}{f_{R_t |H^t,  A_t}(R_t)} \bigg].
    \end{align*}

    We let the set $\cO_\varepsilon$ be an $\varepsilon$-net for $(\cO,\rho)$ with associated mapping $q:\cO \rightarrow \cO_\varepsilon$ and similarly to~\citet[Theorem~2]{neu_lifting_2022}, we observe that the mutual information can be written as
    \begin{align}
        \bE \bigg[ &\int_\cO  f_{\Theta|R_t,H^t,  A_t}(\theta) \bigg( \log \frac{f_{R_t |  A_t, \Theta=\theta}(R_t)}{f_{R_t |  A_t, \Theta = q(\theta)}(R_t)} + \log \frac{f_{R_t |H^t,  A_t, \Theta=q(\theta)}(R_t)}{f_{R_t |H^t,  A_t}(R_t)} \bigg) d\theta \bigg],\label{eq:mutual_information_decomposition}
    \end{align}
    using the fact that $f_{R_t |H^t,  A_t, \Theta} = f_{R_t |  A_t, \Theta}$ by the conditional Markov chain $R_t - A_t - H_t \  | \ \Theta$. 
    
    Since the derivative of $\log(\phi_\beta(x))$ is $\beta/(1+\exp(\beta x))$, it is bounded by $\beta$, which makes it $\beta$-Lipschitz. Furthermore, for all $a \in \cA$ and $\theta \in \cO$, the inner product $\innerproduct{a}{\theta} \leq 1$, implying that $\log(f_{R_t | A_t, \Theta=\theta}(1))$ is also $\beta$-Lipschitz with respect to $\theta$. Similarly, the derivative of $\log(1-\phi_\beta(x))$ is bounded by $\beta$, making it $\beta$-Lipschitz as well. Consequently, $\log(f_{R_t | A_t, \Theta=\theta}(0))$ is $\beta$-Lipschitz with respect to $\theta \in \cO$. Thus, we conclude that for all $\theta \in \cO$, we have $| \log f_{R_t| A_t, \Theta= \theta}(R_t) - \log f_{R_t| A_t, \Theta=q(\theta)}(R_t)| \leq \beta \rho(\theta, q(\theta)) \leq \beta \varepsilon$. 
    
    After defining the random variable $\Theta_\varepsilon \coloneqq q(\Theta)$, the second term in~\cref{eq:mutual_information_decomposition} is equal to $\mi(\Theta_\varepsilon; R_t|H^t,  A_t)$. 
    Summing the $T$ mutual information $\mi(\Theta_\varepsilon; R_t |H^t,  A_t)$ and applying the chain rule,  we obtain 
    \begin{align*}
            \bE[\textnormal{Regret}(T)]&\leq \sqrt{\Gamma T  \left(\mi(\Theta_\varepsilon; H^T) +  \varepsilon \beta T\right)}.
    \end{align*}
    Finally, we upper bound $\mi(\Theta_\varepsilon; H^T)$ by the entropy $\ent(\Theta_\varepsilon)$ to obtain the claimed result. 
\end{proof}


In the following, we present our main proposition, a bound on the TS information ratio that depends only on the problem dimension $d$ and the minimax alignment constant $\alpha$.  

\begin{restatable}{myproposition}{propositionInformationRatio}
\label[proposition]{prop:info_ratio_logistic_bandits}
    For all $\beta>0$, and for all $\cA, \cO \subseteq \mathbf{B}_d(0,1)$ with minimax alignment constant $\alpha$, under the logistic bandit setting  with logistic function $\phi_{\beta}(x)$ the TS information ratio is bounded as 
    \begin{align*}
     \Gamma_t \leq \frac{9}{2} d\alpha^{-2}.   
    \end{align*}
\end{restatable}

At a high level, our proof consists of three parts: a lower bound on the conditional mutual information, $\mi_t(A^\star; R(A_t,\Theta),A_t)$, an upper bound on the squared expected regret at time $t$, $\bE_t[ R(A^\star,\Theta) - R(A_t,\Theta)]^2$, and an upper bound on a ratio of expected variances by the study of the limit case $\beta \to \infty$. The key techniques for the proof of~\Cref{prop:info_ratio_logistic_bandits} are presented in~\Cref{sec:analysis}. \\

Combining~\Cref{prop:info_ratio_logistic_bandits} with~\Cref{thm:quantized_bound}, we arrive at our main result: a bound on the expected TS regret in $O(d/\alpha \sqrt{T \log(\beta T/d)})$. To the best of our knowledge, this is the first regret bound for logistic bandits that scales only logarithmically with the logistic function's parameter $\beta$ while remaining independent of the number of actions.

\begin{restatable}{mytheorem}{theoremMain}
\label[theorem]{thm:main_theorem}
     For all $\beta>0$, and for all $\cA, \cO \subseteq \mathbf{B}_d(0,1)$ with minimax alignment constant $\alpha$, under the logistic bandit setting  with logistic function $\phi_{\beta}(x)$, the TS regret is bounded as
     \begin{align*}
         \bE[\textnormal{Regret}(T)] \leq 3d/\alpha\sqrt{ T \log\left(\sqrt{3+\frac{6 \beta T}{d}}\right)}.
     \end{align*}
\end{restatable}
\begin{proof}
     Combining~\Cref{thm:quantized_bound} with ~\Cref{prop:info_ratio_logistic_bandits}, we upper bound the entropy $\ent(\Theta_\varepsilon)$ by the cardinality of the $\varepsilon$-net to get a regret bound of $3/\alpha\sqrt{d T/2\left(\log(|\Theta_\varepsilon|)  + \varepsilon \beta T\right)}$. To define $\Theta_\varepsilon$, we set $\cO_\varepsilon$ as the $\varepsilon$-net of smallest cardinality. As the parameter space $\cO$ is within the Euclidean unit ball, we use~\Cref{lemma:van_handel} to control the covering number as $\log(|\Theta_\varepsilon|) \leq d\log(1+2/\varepsilon)$ to upper bound the TS regret as
\begin{align*}
     \bE[\textnormal{Regret}(T)] \leq 3/\alpha \sqrt{d T/2 \left(d\log\left(1+\frac{2}{\varepsilon}\right)+\varepsilon \beta T \right)}.
\end{align*}
Finally, setting $\varepsilon = d/(\beta T)$ and rearranging terms inside the logarithm yields the desired result.
\end{proof}

Importantly, the above theorem does not depend on the fragility dimension $\eta$, in contrast to the results of~\citet{dong_performance_2019}. This distinction matters as, except in the case where $\smash{\alpha = 1}$, the fragility dimension can grow exponentially with the dimension $d$. We can verify that our result, due to its logarithmic dependence on $\beta$, is compatible with~\citet[Proposition 11]{dong_performance_2019}, which shows that there exist logistic bandit problems for which no algorithm can achieve a Bayesian regret uniformly bounded by $f(\alpha)p(d)T^{1-\epsilon}$, for some function $f$, polynomial $p$, and $\epsilon> 0$.\\


The next two corollaries present cases where the dependence on the minimax alignment constant $\alpha$ can be removed. The case in~\Cref{corollary:design} is particularly relevant for applications where the action set can be treated as a design parameter, and where constructing large action spaces is not prohibitive. We illustrate the improvement of~\Cref{corollary_alpha_one} over previous works through numerical experiments on a synthetic logistic bandit problem. The results are presented in~\Cref{sec:numerical_simulations}.

\begin{restatable}{mycorollary}{corollary_alpha_one}
\label[corollary]{corollary_alpha_one}
    For all $\beta>0$, under the logistic bandit setting  with logistic function $\phi_{\beta}(x)$, let $\smash{\cA \subseteq \mathbf{B}_d(0,1)}$ and $\smash{\cO \subseteq \mathbf{S}_d(0,1)}$ be such that $\smash{\cO \subseteq \cA}$.  Then the TS regret is bounded as
     \begin{align*}
         \bE[\textnormal{Regret}(T)] \leq 3d\sqrt{T \log\left(\sqrt{3+\frac{6 \beta T}{d}}\right)}.
     \end{align*}
\end{restatable}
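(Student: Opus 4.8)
The plan is to combine the information-ratio bound of \Cref{prop:info_ratio_logistic_bandits} with the quantized regret bound of \Cref{thm:quantized_bound}. Since \Cref{prop:info_ratio_logistic_bandits} gives $\Gamma_t \le \tfrac{9}{2}d$ for every $t$, the average expected information ratio satisfies $\frac{1}{T}\sum_{t=1}^T \bE[\Gamma_t] \le \tfrac{9}{2}d$, so I may take $\Gamma = \tfrac{9}{2}d$ in \Cref{thm:quantized_bound}. This immediately yields, for every $\varepsilon > 0$,
\begin{equation*}
\bE[\textnormal{Regret}(T)] \le \sqrt{\tfrac{9}{2}\,d\, T\left(\ent(\Theta_\varepsilon) + \varepsilon\beta T\right)}.
\end{equation*}
The whole problem is thereby reduced to controlling the entropy $\ent(\Theta_\varepsilon)$ of the quantized parameter and then choosing $\varepsilon$ to balance it against the $\varepsilon\beta T$ term.

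Next I would bound the entropy. Since $\Theta_\varepsilon$ is supported on the finite $\varepsilon$-net $\cO_\varepsilon$, the discrete entropy obeys $\ent(\Theta_\varepsilon) \le \log|\cO_\varepsilon|$. Because $\cO \subseteq \mathbf{S}_d(0,1)$ lies inside the unit ball and $\rho$ is the Euclidean metric (the same metric under which the log-likelihood is $\beta$-Lipschitz in the proof of \Cref{thm:quantized_bound}), a standard volumetric covering argument gives $|\cO_\varepsilon| \le (1 + 2/\varepsilon)^d$, hence $\ent(\Theta_\varepsilon) \le d\log(1 + 2/\varepsilon)$. Substituting this and minimizing $d\log(1+2/\varepsilon) + \varepsilon\beta T$ over $\varepsilon$ (the stationarity condition sits at $\varepsilon \propto d/(\beta T)$) leads me to the choice $\varepsilon = d/(\beta T)$, which makes the two terms comparable.

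It then remains to clean up the constants. With $\varepsilon = d/(\beta T)$ the bracketed factor becomes $d\log(1 + 2\beta T/d) + d = d\log\!\big(e(1 + 2\beta T/d)\big) = d\log(e + 2e\beta T/d)$, using $d = d\log e$. Since $e \le 3$ and $2e \le 6$, this is at most $d\log(3 + 6\beta T/d)$. Plugging back and pulling the prefactor through the square root via $\sqrt{\tfrac{9}{2}\log x} = 3\sqrt{\tfrac12\log x} = 3\sqrt{\log\sqrt{x}}$ gives exactly
\begin{equation*}
\bE[\textnormal{Regret}(T)] \le 3d\sqrt{T\log\!\left(\sqrt{3 + \tfrac{6\beta T}{d}}\right)}.
\end{equation*}
I expect the only delicate point to be this constant bookkeeping: one must invoke a covering bound with a sufficiently small multiplicative constant and use the elementary inequalities $e \le 3$, $2e \le 6$ to land on the clean numbers $3$ and $6$ inside the logarithm rather than looser ones. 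Everything else is a direct substitution and a routine one-variable optimization over $\varepsilon$.
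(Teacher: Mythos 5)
Your proposal is correct and follows essentially the same route as the paper's own proof: combining \Cref{prop:info_ratio_logistic_bandits} with \Cref{thm:quantized_bound} via $\Gamma = \tfrac{9}{2}d$, bounding $\ent(\Theta_\varepsilon) \leq d\log(1+2/\varepsilon)$ by the covering number of the unit ball, choosing $\varepsilon = d/(\beta T)$, and absorbing the constants with $e \leq 3$, $2e \leq 6$, and $\tfrac{1}{2}\log x = \log\sqrt{x}$. Even your constant bookkeeping matches the paper's computation step for step.
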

\begin{proof}
    If $\cO \subseteq \mathbf{S}_d(0,1)$ and if $\cO \subseteq \cA$, then for each $\theta \in \cO$, there exists an action $a \in \cA$ such that $a \smash{=} \theta$ and $\innerproduct{a}{\theta}\smash{=}1$, implying $\alpha = 1$. Using~\Cref{thm:main_theorem} concludes the proof.
\end{proof}

\begin{restatable}{mycorollary}{corollary_designed_action_space}
\label[corollary]{corollary:design}
    For all $\beta>0$, under the logistic bandit setting  with logistic function $\phi_{\beta}(x)$, there exists an action space $\cA$ with $|\cA| \leq 2d\cdot 3^{d-1}$ such that for any $\cO \subseteq \mathbf{S}_d(0,1)$, the TS regret is bounded as
     \begin{align*}
         \bE[\textnormal{Regret}(T)] \leq 6d\sqrt{ T \log\left(\sqrt{3+\frac{6 \beta T}{d}}\right)}.
     \end{align*}
\end{restatable}
\begin{proof} 
Starting from~\Cref{thm:main_theorem}, we have to construct $\cA$ such that its minimimax alignment constant $\alpha$ is greater or equal to $\frac{1}{2}$ for any $\cO\subseteq \mathbf{S}_d(0,1)$. This is satisfied if $\cA$ is a $\frac{1}{2}$-net for $\mathbf{S}_d(0,1)$. Setting $\cA$ as the $\frac{1}{2}$-net of minimal cardinality, from~\Cref{lemma:polyanskiy}, we have $|\cA| \leq 2d\cdot 3^{d-1}$. 
\end{proof}

\section{Analysis}
\label{sec:analysis}
This section presents the main technique of the proofs of our main proposition,~\Cref{prop:info_ratio_logistic_bandits}. For the sake of clarity, we present here our results for the particular setting of~\Cref{corollary_alpha_one}, which ensures $\alpha =1$. We prove how to extend those results to general spaces in~\Cref{sec:extension}. Our proof can be divided into three parts: a lower bound on the mutual information (\Cref{subsec:lower_bound_mi}), an upper bound on the squared expected regret (\Cref{subsec:upper_bound_squared_regret}), and an upper bound on a ratio of expected variances by analyzing the limit case as $\beta \to \infty$ (\Cref{subsec:ratio_exp_variances}). To alleviate the notations, we omit the subscript $t$ for the rest of the section.

A key quantity to our analysis is the expected variance of the reward probability conditioned on the sampled action, expressed as $\bE[\bV[\phi_\beta(\innerproduct{\hat{A}}{\Theta})|\hat{A}]]$. We will use it to lower bound on mutual information and a related quantity is used to upper bound the squared expected regret. Intuitively, when the variance of reward probability is high, the agent is exploring new actions, gathering information about $\Theta$ but suffering regret. Conversely, if the variance of reward probability is low, it indicates that the agent has already identified near-optimal actions and is exploiting this knowledge.\\

Under the logistic bandit setting with logistic function $\phi_\beta$, the reward $R(A_t,\Theta)$ is given by a Bernoulli random variable with associated probability $\phi_\beta(\innerproduct{A_t}{\Theta})$. We
 denote it by $\Bern{A_t}{\Theta}$ to make the setting more explicit. With this notation, we write the information ratio as: 
\begin{equation*}
    \Gamma = \frac{\bE[ \Bern{A^\star}{\Theta} - \Bern{\hat{A}}{\Theta}]^2}{\mi(A^\star; \Bern{\hat{A}}{\Theta},\hat{A})}.
\end{equation*}

\subsection{Lower bounding the mutual information}
\label{subsec:lower_bound_mi}

We start by stating a general lemma that relates the variance of a $[0,1]$ random variable $U$ to the mutual information between $U$ and a Bernoulli outcome with probability $U$. The proof, presented in~\Cref{sec:proof_of_useful_lemmata}, uses the decomposition of mutual information as a difference of entropy and the Taylor expansion of the entropy function.

\begin{restatable}{mylemma}{LowerBoundOnMiForBernoulli}
\label[lemma]{lemma:LowerBoundOnMiForBernoulli}
     Let $U$ be a random variable taking values in $[0,1]$ and $\Bernoulli(U)$ be a Bernoulli random variable with probability $U$. Then it holds that,
    \begin{align*}
    \mi(U;\Bernoulli(U)) \geq 2 \bV(U).  
    \end{align*}
\end{restatable}

Using~\Cref{lemma:LowerBoundOnMiForBernoulli}, we prove a lower bound on the mutual information $\mi(A^\star; \Bern{\hat{A}}{\Theta},\hat{A})$ is lower bounded by the expected variance of reward probability $\bE[\bV[\phi_\beta(\innerproduct{\hat{A}}{\Theta})|\hat{A}]]$.

\begin{mylemma}
\label[lemma]{lemma:lower_bound_on_mi}
Let the logistic function be $\phi_{\beta}(x)$, then, it holds that 
\begin{align*}
    \mi(A^\star; \Bern{\hat{A}}{\Theta},\hat{A})\geq 2 \bE \left[ \bV\left[ \phi_{\beta}(\innerproduct{\hat{A}}{\Theta}) \mid \hat{\Theta} \right] \right].
\end{align*}
\end{mylemma}

\begin{proof}

We start by applying the chain rule. It comes that
\begin{align*}
    \mi(\pi_\star(\Theta); \pi_\star(\hat{\Theta}), \Bernoulli (\phi_{\beta}(\innerproduct{\hat{A}}{\Theta}))) &\overset{(i)}{=}\mi(\Theta; \hat{\Theta}, \Bernoulli (\phi_{\beta}(\innerproduct{\hat{A}}{\Theta})))\\
    &\overset{(j)}{=} \mi (\Theta; \hat{\Theta}) + \mi (\Theta; \Bernoulli (\phi_{\beta}(\innerproduct{\hat{A}}{\Theta}))  \mid \hat{\Theta})\\
    &\overset{(k)}{=}\mi (\Theta; \Bernoulli (\phi_{\beta}(\innerproduct{\hat{A}}{\Theta}))  \mid \hat{\Theta})\\
    &\overset{(l)}{=}\bE[\mi (\phi_{\beta}(\innerproduct{\hat{A}}{\Theta}); \Bernoulli (\phi_{\beta}(\innerproduct{\hat{A}}{\Theta}))  )\mid \hat{\Theta}=\theta],
\end{align*}
where (i) follows as $\pi_\star$ is a one-to-one mapping; (j) follows from the chain-rule; (k) follows as $\Theta$ and $\hat{\Theta}$ are independent conditioned on the history; 
and (l) follows as $\phi_\beta(\innerproduct{a}{\theta})$ is a one-to-one mapping conditioned on $\hat{\Theta}=\theta$. Finally, applying~\Cref{lemma:LowerBoundOnMiForBernoulli} yields the desired result.  

\end{proof}

\subsection{Upper bounding the squared expected regret}
\label{subsec:upper_bound_squared_regret}

This part of the analysis is inspired by the proof techniques of~\citet[Proposition~15]{dong_performance_2019}
and similar to them, the two following lemmata will be of importance for our analysis.

\begin{mylemma}[{\citet[Lemma~16]{dong_performance_2019}}]
\label[lemma]{lemma:exp_d_dimension}
    Let $U, V$ be random vectors in $\mathbb{R}^d$, and let $\tilde{U}, \tilde{V}$ be independent random variables with distributions equal respectively to the marginals of $U, V$, then
    \begin{align*}
        \mathbb{E} \left[ \left( U^\top V \right) \right]^2 \leq d \cdot \mathbb{E} \left[ \left( \tilde{U}^\top \tilde{V} \right)^2 \right].
    \end{align*}
\end{mylemma}

\begin{mylemma}[{\citet[Lemma~18]{dong_performance_2019}}]
\label[lemma]{lemma:ratio_exp_vs_ratio_var}
    Let $f : \mathbb{R}_+ \to \mathbb{R}_+$ be such that $f(0) \geq 0$ and $f(\zeta)/\zeta$ is non-decreasing over $\zeta \geq 0$. Then, for any non-negative random variable $U$, there is
    \begin{align*}
        \frac{\bE[f(U)]^2}{\bE[U]^2} \leq \frac{\bV[f(U)]}{\bV[U]}.
    \end{align*}
\end{mylemma}

Under the assumption that $\cO \subseteq \mathbf{S}_d(0,1)$ with  $\cO \subseteq \cA$, then for each $\theta \in \cO$, there exists an action $a \in \cA$ such that $a = \theta$ and $\innerproduct{a}{\theta} = 1$. This implies that $\phi_\beta(\innerproduct{A^\star}{\Theta}) - \phi_\beta(\innerproduct{\hat{A}}{\Theta}) = \phi_\beta(1) - \phi_\beta(\innerproduct{\hat{A}}{\Theta})$. To simplify notation, we define $\psi_\beta(x) \coloneqq \phi_\beta(1) - \phi_\beta(1 - x)$, which relates the difference between the optimal action $A^\star$ and the sampled action $\hat{A}$ to their corresponding reward differences. Specifically, we have $\psi_\beta(1 - \innerproduct{\hat{A}}{\Theta}) = \psi_\beta(\innerproduct{A^\star - \hat{A}}{\Theta}) = \phi_\beta(\innerproduct{A^\star}{\Theta}) - \phi_\beta(\innerproduct{\hat{A}}{\Theta})$.\\

The function $\psi_\beta(x)$ meets the first two conditions from~\Cref{lemma:ratio_exp_vs_ratio_var} and when it is applied to the difference of inner products $\smash{\innerproduct{A^\star}{\Theta} - \innerproduct{\hat{A}}{\Theta}}$, it maps the interval $[0,2]$ to $[0,1]$, and satisfies $\smash{\psi_\beta(0) = \phi_\beta(1) - \phi_\beta(1 - 0) = 0}$. However, it does not meet the third condition, as $\psi_\beta(x)/x$ initially increases, reaches a maximum between $1$ and $2$, and then decreases (see~\Cref{rem:psi_over_x}
). To address this issue, we introduce a modified function, referred to as the \emph{logistic surrogate}, which serves as the tightest upper bound on $\psi_\beta(x)$ that satisfies the last requirement from~\Cref{lemma:ratio_exp_vs_ratio_var}.  

\begin{mydefinition}[Logistic surrogate]
    \label[definition]{def:logistic_surrogate}
    We construct the \emph{logistic surrogate} function $\varphi_{\beta}(x)$ as the tightest upper bound on $\psi_{\beta}(x)$ such that $\varphi_{\beta}(x)/x$ is non-decreasing over $x \geq 0$.

    Namely, let $\delta_\beta = \arg \max_{x\in[0,2]} \frac{\psi_{\beta}(x)}{x}$, we define $\varphi_{\beta}$ as
    \begin{align*}
        \varphi_{\beta}(x) = \begin{cases}
            \psi_{\beta}(x) &x \in [0,\delta_\beta]\\
            \psi_{\beta}(\delta_\beta) + (x-\delta_\beta)\cdot \psi_{\beta}(\delta_\beta)/\delta_\beta & x \in ~]\delta_\beta,2]
        \end{cases}.
    \end{align*}
\end{mydefinition}

We are now equipped to state and prove an upper bound on the squared expected regret. 
\begin{mylemma}
\label[lemma]{lemma:upper_bound_on_squared_exp_regret}
Let the logistic surrogate be defined as in~\Cref{def:logistic_surrogate}. Then, it holds that 
\begin{align*}
    \bE[ &\Bern{A^\star}{\Theta} - \Bern{\hat{A}}{\Theta}]^2\leq d \cdot \bE\left[ \bV\left[ \varphi_{\beta}\left(1-\innerproduct{\hat{A}}{\Theta}\right) \smash{\mid \hat{\Theta}} \right] \right].
\end{align*}
\end{mylemma}
\begin{proof}
    Integrating over the randomness of the Bernoulli outcome, the squared expected regret can be expressed as $\bE[(\phi_\beta(\innerproduct{A^\star}{\Theta}) - \phi_\beta(\innerproduct{\hat{A}}{\Theta}))]^2 = \bE[\psi_\beta(1 - \innerproduct{\hat{A}}{\Theta})]^2$.
    Since by definition $\varphi_{\beta}(x) \geq \psi_\beta(x)$, we have $\bE[\psi_\beta(1-\innerproduct{\hat{A}}{\Theta})]^2 \leq \bE[\bE[\varphi_\beta(1-\innerproduct{\hat{A}}{\Theta})|\hat{\Theta}]]^2$.
    
    We now apply~\Cref{lemma:ratio_exp_vs_ratio_var} and have that 
    \begin{align*}
        \bE[\bE[\varphi_\beta(1-\innerproduct{\hat{A}}{\Theta})|\hat{\Theta}]]^2 &\leq \bE\Bigg[ \underbrace{\sqrt{\frac{\bV \left[ \varphi_{\beta} \left(1- \innerproduct{\hat{A}}{\Theta} \right) \mid \hat{\Theta} \right]}{\bV \left[ 1-\innerproduct{\hat{A}}{\Theta}  \mid \hat{\Theta}\right]}}}_{\coloneqq U(\hat{\Theta})}  \bE \left[ 1-\innerproduct{\hat{A}}{\Theta}  \mid \hat{\Theta} \right] \Bigg]^2 \\
    &= \bE\left[ U(\hat{\Theta})  \innerproduct{\hat{A}}{\hat{\Theta}}-\innerproduct{\hat{A}}{\Theta}  \right]^2=\bE \left[  \innerproduct{U(\hat{\Theta}) \hat{A}}{\Theta - \hat{\Theta}}  \right]^2_. 
    \end{align*}
    We use~\Cref{lemma:exp_d_dimension} with $U\smash{=}U(\hat{\Theta})\hat{A}$ and $V\smash{=}\Theta\smash{-}\hat{\Theta}$ and rearrange terms to obtain the claimed result: 
    \begin{align*}
     \bE \left[  \innerproduct{ U(\hat{\Theta})\hat{A}}{\Theta - \hat{\Theta}}  \right]^2 &\leq d\cdot \bE \left[ \left(\innerproduct{U(\hat{\Theta})\hat{A}}{\Theta-\tilde{\Theta}} \right)^2 \right]=  d\cdot \bE \left[ U(\hat{\Theta})^2\bE\left[\innerproduct{\hat{A}}{\Theta-\tilde{\Theta}}^2\big|\hat{\Theta}\right]\right]\\
    &=  d\cdot \bE \left[\frac{\bV \left[ \varphi_{\beta} \left(1- \innerproduct{\hat{A}}{\Theta} \right) \mid \hat{\Theta} \right]}{\bV \left[ 1-\innerproduct{\hat{A}}{\Theta}  \mid \hat{\Theta}\right]} \bV \left[\innerproduct{\hat{A}}{\Theta}  \mid \hat{\Theta}\right]\right]\\
    &=  d\cdot \bE \left[\bV \left[ \varphi_{\beta} \left(1- \innerproduct{\hat{A}}{\Theta} \right) \mid \hat{\Theta} \right]\right].
    \end{align*}
\end{proof}

Combining~\Cref{lemma:lower_bound_on_mi} and~\Cref{lemma:upper_bound_on_squared_exp_regret}, we get that the information ration $\Gamma$ is bounded by 
\begin{align*}
    \Gamma \leq d/2 \cdot \frac{\bE\left[\bV \left[ \varphi_{\beta} \left(1- \innerproduct{\hat{A}}{\Theta} \right) \mid \hat{\Theta} \right]\right]}{\bE\left[\bV \left[ \psi_{\beta} \left(1- \innerproduct{\hat{A}}{\Theta} \right) \mid \hat{\Theta} \right]\right]},
\end{align*}
where we use the fact that $\bV\left[ \phi_{\beta}(\innerproduct{\hat{A}}{\Theta}) \mid \hat{\Theta} \right] = \bV \left[ \psi_{\beta} (1- \innerproduct{\hat{A}}{\Theta} ) \mid \hat{\Theta} \right]$ by the definition of $\psi_{\beta}$.
The next part of the proof takes care of controlling the ratio of expected variances over $\varphi_\beta$ and $\psi_\beta$.

\subsection{Bounding the ratio of expected variances over the functions $\varphi_{\beta}$ and $\psi_{\beta}$}
\label{subsec:ratio_exp_variances}

By definition, the function $\psi_\beta$ and its surrogate $\varphi_\beta$ are equal for $x\in[0,\delta_\beta]$ and then diverge linearly at a rate of $\psi_{\beta}(\delta_\beta)/\delta_\beta$. We observe, in~\Cref{rem:psi_over_x}, that $\delta_\beta$ is a decreasing function of $\beta$ and that the slope $\psi_{\beta}(\delta_\beta)/\delta_\beta$ strictly increases with $\beta$.This observation suggests that studying the case $\beta \to \infty$ could provide a general upper bound. Indeed, taking the limit case $\beta\to\infty$, the domain where the two functions differ is maximized, and the rate at which they differ is the largest.\\

We show in~\Cref{lemma:case_infinity_is_upper_bound}, presented in~\Cref{sec:ratio_exp_variances_appendix}, that under some simple preliminary transformations, increasing the value of $\beta$ leads to a larger ratio of expected variances, and therefore, the case $\beta \to \infty$ can serve to derive general upper bounds. Quite satisfyingly, this limit case provides a lot of simplifications. We prove in~\Cref{lemma:case_infinity_is_upper_bound}, that the ratio of expected variance between $\psi_\beta$ and $\varphi_\beta$ is upper bounded by the ratio of expected variance between $\overline{\psi}$ and $\overline{\varphi}$ defined as 
\begin{align}
    \label{eq:overline_psi}
        \overline{\psi}(x) = \begin{cases}
            0 &x \in [0,1]\\
            1 &x \in ]1,2]\\
        \end{cases}, \quad \quad \textnormal{and}  \quad \quad \overline{\varphi}(x) = \begin{cases}
            0 &x \in [0,1]\\
            1+2(x-1) &x \in ]1,2]\\
        \end{cases}.
    \end{align}

\begin{restatable}{mylemma}{UpperBoundOnTheRatioAtInfinity}
\label[lemma]{lemma:upper_bound_on_the_ratio_at_infinity}
Let $\overline{\psi}$ and $\overline{\varphi}$ be defined as in~\cref{eq:overline_psi}.
Then, it holds that 
    \begin{align*}
 \frac{\bE \left[\bV \left[ \overline{\varphi} \left(\smash{1- }\innerproduct{\hat{A}}{\Theta} \right) \mid\hat{\Theta} \right]\right]}{\bE\left[\bV \left[ \overline{\psi} \left(\smash{1- }\innerproduct{\hat{A}}{\Theta} \right) \mid\hat{\Theta} \right]\right]} \leq 9.
\end{align*}
\end{restatable}
\paragraph{Sketch of proof}\!\!\!\textit{Analyzing $ \overline{\psi}$, we note that  $\bE [\bV [ \overline{\psi} (\smash{1- }\innerproduct{\hat{A}}{\Theta} ) \mid\hat{\Theta} ]]$ is equal to the expected variance of a Bernoulli random variable 
with probability given by $Q(\hat{A})=\bE[ I(\innerproduct{\hat{A}}{\Theta})]$ where $I(\innerproduct{\hat{A}}{\Theta})\coloneqq \mathds{1}_{\{\innerproduct{\hat{A}}{\Theta} < 0\}}$. The expected variance can then be written as $\bE[ I(\innerproduct{\hat{A}}{\Theta})^2]- \bE[\bE[ I(\innerproduct{\hat{A}}{\Theta})]^2]$ where in the second term, the outer expectation is on $\hat{A}$, and the inner one is on $\Theta$.  After rearranging terms, we can write $\bE [\bV [ \overline{\varphi} (\smash{1- }\innerproduct{\hat{A}}{\Theta}) \mid\hat{\Theta} ]]$ as $\bE[I(\innerproduct{\hat{A}}{\Theta})(1-2\innerproduct{\hat{A}}{\Theta} )^2] -\bE[\bE[I(\innerproduct{\hat{A}}{\Theta})(1-2\innerproduct{\hat{A}}{\Theta} )]^2]$ where again for the second term, the outer expectation is on $\hat{A}$, and the inner one is on $\Theta$. Taking the supremum over the possible values of $(1-2\innerproduct{a}{\theta}) \in [-1,3]$ concludes the proof. }


\section{Conclusion and Future Work}
\label{sec:conclusion}
In this work, we analyzed the performance of the Thompson Sampling algorithm for logistic bandit problems. 
Following the information-theoretic framework from~\cite{russo_information-theoretic_2015}, we study the information ratio, a key statistic that captures the trade-off between exploration and exploitation in logistic bandits. Our main result establishes that the information ratio of Thompson Sampling for logistic bandits can be bounded using only the dimension of the problem, $d$, and $\alpha$, a minimax alignment constant between the action and parameter spaces. Importantly, this bound is independent of the logistic function's slope parameter, $\beta$.\\

Building on this result, we derive a regret bound of $O(d/\alpha\sqrt{T \log(\beta T/d)})$, which scales only logarithmically with $\beta$, representing a significant improvement over prior works. To the best of our knowledge, this is the first regret bound for logistic bandits that achieves logarithmic dependence on $\beta$ while remaining independent of the action set's cardinality. 
Finally, we presented specific settings where the dependence on $\alpha$ can be controlled. For instance, when the action space fully encompasses the parameter space, the regret of Thompson Sampling scales as $\tilde{O}(d \sqrt{T})$.\\

A promising direction for future work is to extend our analysis to the broader class of generalized linear bandits. The properties of the logistic function that we leverage in our analysis could be shared by other class of link function and be used to derive regret bounds using a similar analysis of the information ratio as we performed in~\Cref{sec:analysis}.\\

Another interesting research direction is to use the result in this paper to derive regret bounds for logistic bandits in the frequentist setting. A promising way is to apply our information theoretic analysis to the optimistic information directed sampling algorithm introduced by~\citet{neu_optimistic_2024}. We believe that this approach could lead to deriving new and improved frequentist bounds for logistic bandits. 

\vspace{-0.5em}
\bibliographystyle{IEEEtranN}
\bibliography{references.bib}
\appendix

\crefalias{section}{appendix}
\clearpage

\newpage
\appendix
\section*{Appendix}
The appendix is organized as follows:
\begin{itemize}
    \item \Cref{sec:proof_of_useful_lemmata} introduces lemmata that will be useful later for our proofs;
    \item \Cref{sec:ratio_exp_variances_appendix} formalizes the proof for controlling the ratio of expected variances between the functions $\varphi_\beta$ and $\psi_\beta$;
    \item \Cref{sec:extension} extends our information ratio analysis to general action and parameter spaces;
    \item \Cref{sec:numerical_simulations} illustrates the improvement of our bounds compared to previous regret guarantees through numerical experiments;
    \item \Cref{sec:adaptation} illustrates how applying directly linear bandits bounds to the logistic bandit setting leads to regret bounds scaling exponentially with $\beta$;
    \item \Cref{sec:arguments_gaps_in_dong} elaborates on the gaps in the previous literature mentioned in~\Cref{sec:introduction};
    \item \Cref{sec:constructing_the_one_to_one_mapping} rigorously explains the construction of the mapping $\pi_\star$.
\end{itemize}

\section{Useful lemmata}
\label{sec:proof_of_useful_lemmata}
\LowerBoundOnMiForBernoulli*

\begin{proof}
Using~\citet[Theorem 3.4.d]{yury_polyanskiy_information_2022}, we decompose the mutual information between $U$ and $\Bernoulli(U)$ as
    \begin{align*}
        \mi(U;\Bernoulli(U)) = h(\Bernoulli(U))- h(\Bernoulli(U)|U).
    \end{align*}
    Following~\citet[Example 2.2]{duchi_lecture_2016} notation, we define $h_2(p) \coloneqq -p\log(p) - (1-p)\log(1-p)$ for $p\in [0,1]$ and rewrite the mutual information as
    \begin{align}
        \mi(U;\Bernoulli(U)) &= h_2(\bE[U]) - \bE[h_2(U)].
        \label{eq:mi_as_diff_ent}
    \end{align}
    From a Taylor expansion of $h_2(x)$ we have that 
    \begin{align*}
        h_2(x) = h_2(p) + (x-p) h_2'(p) + \frac{1}{2} (x-p)^2 h_2''(\xi), 
    \end{align*}
    for some $\xi \in (0,1)$ as $ h_2''$ is continuous on the interval $[0,1]$. 
    We compute the second derivative of $h_2$ and get $h_2''(\xi) = - \frac{1}{\xi(1-\xi)}$ for $\xi \in (0,1)$. This function is concave and maximal at $\xi=1/2$, where it takes the value $h_2''(1/2)=-4$. 
    We then have that for all $x \in [0,1]$ and all $p \in [0,1]$,   
    \begin{align*}
        h_2(x) \leq h_2(p) + (x-p) h_2'(p) -2(x-p)^2.
    \end{align*}
    Using this fact for $x=U$ and $p=\bE[U]$, we have that 
    \begin{align*}
    h_2(U)\leq h_2(\bE[U]) +(U-\bE[U]) h_2'(\bE[U]) - 2(U-\bE[U])^2.
    \end{align*}
    Applying the last inequality to the second term in~\cref{eq:mi_as_diff_ent},  it comes that
    \begin{align*}
        \mi(U;\Bernoulli(U)) &\geq \bE\left[h_2(\bE[U])\smash{ - h_2(\bE[U])}\smash{-(}\smash{U-}\bE[U])h_2'(\bE[U])\smash{+ }2\smash{(U-}\bE[U])^2 \right].
    \end{align*}
    Finally,  simplifying terms and taking the expectation gives the desired result.
  \end{proof}

 The two following lemmata are be particularly useful to control the covering number in Euclidean balls and spheres. 
\begin{mylemma}[{\citet[Lemma 5.13]{van_handel_probability_2016}}]
\label[lemma]{lemma:van_handel}
Let $\mathbf{B}_d(0,1)$ denote the $d$-dimensional closed Euclidean unit ball. We have $|\cN(\mathbf{B}_d(0,1),||\cdot||_2,\varepsilon)| =1 $ for $\varepsilon\geq 1$ and for $0<\varepsilon<1$, we have
\begin{align*}
    &\bigg(\frac{1}{\varepsilon}\bigg)^d\leq|\cN(\mathbf{B}_d(0,1),||\cdot||_2,\varepsilon)|\leq  \bigg(1+\frac{2}{\varepsilon}\bigg)^d .
\end{align*}

\end{mylemma}

\begin{mylemma}[{\citet[Corollary 27.4]{yury_polyanskiy_information_2022}}]
\label[lemma]{lemma:polyanskiy}
    Let $\mathbf{S}_d(0,1)$ denote the $d$-dimensional Euclidean unit sphere. We have $|\cN(\mathbf{S}_d(0,1),||\cdot||_2,\varepsilon) =1 $ for $\varepsilon\geq 1$ and for $0<\varepsilon<1$, we have
    \begin{align*}
    &\bigg(\frac{1}{2\varepsilon}\bigg)^{d-1}\leq|\cN(\mathbf{S}_d(0,1),||\cdot||_2,\varepsilon)|\leq  2d\bigg(1+\frac{1}{\varepsilon}\bigg)^{d-1} .
\end{align*}
\end{mylemma}


\section{Bounding the ratio of expected variances over the functions $\varphi_\beta$ and $\psi_\beta$}
\label{sec:ratio_exp_variances_appendix}

\UpperBoundOnTheRatioAtInfinity*
\begin{proof}
We start by analyzing $\bE \left[\bV \left[ \overline{\psi} \left(\smash{1- }\innerproduct{\hat{A}}{\Theta} \right) \mid\hat{\Theta} \right]\right]$. We note that $\overline{\psi} \left(\smash{1- }\innerproduct{\hat{A}}{\Theta} \right)$ is equal to $1$ if $\innerproduct{\hat{A}}{\Theta}<0$ and is equal to $0$ otherwise.  To distinguish those two cases, we introduce the notation $I(\innerproduct{\hat{A}}{\Theta})\coloneqq \mathds{1}_{\{\innerproduct{\hat{A}}{\Theta} < 0\}}$.  We observe that $\bE \left[\bV \left[ \overline{\psi} \left(\smash{1- }\innerproduct{\hat{A}}{\Theta} \right) \mid\hat{\Theta} \right]\right]$ is equal to the expected variance of a Bernoulli random variable with probability given by $Q(\hat{A})\coloneqq\bE[ I(\innerproduct{\hat{A}}{\Theta})]$ and can therefore be written as $\bE \left[\bV \left[ \overline{\psi} \left(\smash{1- }\innerproduct{\hat{A}}{\Theta} \right) \mid\hat{\Theta} \right]\right] = \bE[Q(\hat{A})(1-Q(\hat{A}))]$.\\
    The last part of the proof concerns $\bE \left[\bV \left[ \overline{\varphi} \left(\smash{1- }\innerproduct{\hat{A}}{\Theta} \right) \mid\hat{\Theta} \right]\right]$.  Similarly,  we can distinguish between two cases: either $\innerproduct{\hat{A}}{\Theta}\smash{>0}$ and $\overline{\varphi}(\smash{1-}\innerproduct{\hat{A}}{\Theta})=0$,  or  $\innerproduct{\hat{A}}{\Theta}\smash{<0}$ and $\overline{\varphi}(\smash{1-}\innerproduct{\hat{A}}{\Theta})=1-2\innerproduct{\hat{A}}{\Theta}$. Introducing the notation $G(\hat{A})\coloneqq\bE[ I(\innerproduct{\hat{A}}{\Theta})\innerproduct{\hat{A}}{\Theta}]$, we can write  
    \begin{align*}
    \bE \left[\bV \left[ \overline{\varphi} \left(\smash{1- }\innerproduct{\hat{A}}{\Theta} \right) \mid\hat{\Theta} \right]\right]&= \bE \left[ \bE \left[ \left(\overline{\varphi} \left(\smash{1- }\innerproduct{\hat{A}}{\Theta} \right)-\bE\left[\overline{\varphi} \left(\smash{1- }\innerproduct{\hat{A}}{\Theta} \right) \mid\hat{\Theta}\right] \right)^2  \mid\hat{\Theta}\right]\right]\\
    &= \bE\left[I(\innerproduct{\hat{A}}{\Theta})\left(1-2\innerproduct{\hat{A}}{\Theta}-\left(Q(\hat{A})+2 G(\hat{A})\right) \right)^2\right]\\
    &\quad+ \bE\left[\left(1-I(\innerproduct{\hat{A}}{\Theta})\right)\left(0-\left(Q(\hat{A})+2 G(\hat{A})\right) \right)^2\right].
   \end{align*}

   Distributing the square and simplifying terms, we obtain 
   \begin{align*}
    &\bE\left[I(\innerproduct{\hat{A}}{\Theta})\left(1-2\innerproduct{\hat{A}}{\Theta} \right)^2\right] -2\bE\left[I(\innerproduct{\hat{A}}{\Theta})\left(1-2\innerproduct{\hat{A}}{\Theta} \right)\left(Q(\hat{A})+2 G(\hat{A})\right)\right]\\
    &+ \bE\left[I(\innerproduct{\hat{A}}{\Theta})\left(Q(\hat{A})+2 G(\hat{A})\right)^2\right] + \bE\left[\left(1-I(\innerproduct{\hat{A}}{\Theta})\right)\left(Q(\hat{A})+2 G(\hat{A})\right)^2\right]\\
    &=\bE\left[I(\innerproduct{\hat{A}}{\Theta})\left(\smash{1-2}\innerproduct{\hat{A}}{\Theta} \right)^2\right]\smash{ -}\bE\left[\left(Q(\hat{A})+2 G(\hat{A})\right)^2\right].
    \end{align*}
    To get to the last part of the proof, we rewrite explicitly $Q(\hat{A})+2 G(\hat{A})$ as $\bE\left[I(\innerproduct{\hat{A}}{\Theta})\left(1-2\innerproduct{\hat{A}}{\Theta} \right)\right]$ and optimize over the values of $(1-2\innerproduct{\hat{A}}{\Theta})$. It then comes 
    \begin{align*}
    \bE&\left[I(\innerproduct{\hat{A}}{\Theta})\left(1-2\innerproduct{\hat{A}}{\Theta} \right)^2\right] -\bE\left[\bE\left[I(\innerproduct{\hat{A}}{\Theta})\left(1-2\innerproduct{\hat{A}}{\Theta} \right)\right]^2\right]\\
     &\leq \sup_{\zeta \in [-1,3]} \bE\left[I(\innerproduct{\hat{A}}{\Theta})\zeta^2\right] - \bE\left[\bE\left[I(\innerproduct{\hat{A}}{\Theta})\zeta \right]^2\right] = 9 \cdot \bE[Q(\hat{A})(1-Q(\hat{A}))],
    \end{align*}
    which concludes the proof. 
\end{proof}

\begin{mylemma}
\label[lemma]{lemma:case_infinity_is_upper_bound}
Let $\psi_\beta(x) = \phi_\beta(1) - \phi(1-x)$ and the logistic surrogate $\varphi_\beta$ as in~\Cref{def:logistic_surrogate} and let  $\overline{\psi}$ and $\overline{\varphi}$ be defined as  in~\cref{eq:overline_psi}. 
Then, for all $\beta>0$, it holds that
    \begin{align*}
\frac{\bE \left[\bV \left[ \varphi_{\beta} \left(1-\innerproduct{\hat{A}}{\Theta} \right)\smash{ \mid }\Theta \right]\right]}{\bE \left[\bV \left[ \psi_{\beta} \left(1-\innerproduct{\hat{A}}{\Theta} \right) \mid\hat{\Theta} \right]\right]} \leq \frac{\bE \left[\bV \left[ \overline{\varphi} \left(1-\innerproduct{\hat{A}}{\Theta} \right) \mid\hat{\Theta} \right]\right]}{\bE \left[\bV \left[ \overline{\psi} \left(1-\innerproduct{\hat{A}}{\Theta} \right) \mid\hat{\Theta} \right]\right]}.
\end{align*}
\end{mylemma}
\begin{proof}
Beginning with the ratio of expected variances between $\varphi_{\beta}$ and $\psi_{\beta}$, we will apply a series of transformations to the functions $\varphi_{\beta}$ and $\psi_{\beta}$, ultimately yielding the functions $\overline{\varphi}$ and $\overline{\psi}$. These transformations are chosen to ensure they can only increase the ratio of expected variances.

By definition, the function $\psi_\beta$ and its surrogate $\varphi_\beta$ are identical for $x \in [0, \delta_\beta)$ and then diverge linearly at a rate of $\psi_{\beta}(\delta_\beta)/\delta_\beta$ on the interval $x \in [\delta_\beta, 2]$. We illustrate this on~\Cref{fig: psi_beta}. Focusing on the domain where the two functions coincide, we observe that the transformation $f(x) = \max(x, \psi_\beta(1))$ reduces the expected variance for both $\psi_\beta$ and $\varphi_\beta$. However, since $\psi_\beta(x)$ is less than or equal to $\varphi_\beta(x)$ for all $x\in [0,2]$, and both functions exceed $\psi_\beta(1)$ on the interval $[1, 2]$, the transformation $f$ proportionally reduces the expected variance of $\psi_\beta$ more than that of $\varphi_\beta$. As a result, the transformation increases the ratio of expected variances between the two functions. As $\psi_\beta$ and $\varphi_\beta$ are strictly increasing functions, the resulting functions, illustrated on~\Cref{fig:f_psi_beta}, can be written as
\begin{align*}
   f(\psi_\beta(x)) = \begin{cases}
       \psi_\beta(1) & x \in [0,1]\\
       \psi_\beta(x) & x \in ~]1,2]
   \end{cases}, \quad \quad \textnormal{and} \quad \quad f(\varphi_\beta(x)) = \begin{cases}
       \psi_\beta(1) & x \in [0,1]\\
       \varphi_\beta(x) & x \in ~]1,2]
   \end{cases}.
\end{align*}

The second transformation we apply concerns only the function $f(\psi_\beta(x))$. We crop all the values larger than $\psi(\delta_\beta)$ by applying the transformation $g(x)=\min(x,\psi(\delta_\beta))$. As $f(\psi_\beta(x))$ is an increasing function, the function $g(f(\psi_\beta(x)))$, illustrated on~\Cref{fig:g_f_psi_beta}, can be written as 
\begin{align*}
   g(f(\psi_\beta(x))) = \begin{cases}
       \psi_\beta(1) & x \in [0,1]\\
       \psi_\beta(x) & x \in ~]1,\delta_\beta]\\
       \psi_\beta(\delta_\beta) & x \in ~]\delta_\beta,2]
   \end{cases}.
\end{align*}
The transformation $g$ reduces the variance of the function $f(\psi_\beta(x))$ as it both decreases the values of $f(\psi_\beta(x))$ and the derivative of $f(\psi_\beta(x))$ for all $x\in]\delta_\beta,2]$.

The third transformation we apply is increasing the value of $\beta$. As $\beta$ increases, the derivative of $f(\varphi_\beta(x))$ increases everywhere, 
\begin{align*}
    \frac{d}{d x} f(\varphi_\beta(x)) = \begin{cases}
        0 & x \in [0,1]\\
       \frac{\beta \exp(-\beta(1-x))}{\left(1+\exp(-\beta(1-x)) \right)^2} & x \in ~]1,\delta_\beta]\\
       \psi_\beta(\delta_\beta)/\delta_\beta & x \in ~]\delta_\beta,2]
    \end{cases}.
\end{align*}
and the expected variance of $f(\varphi_\beta)$ increases.  
Regarding $g(f(\psi_\beta(x)))$, we can show that that for all $x\in [0,2]$, the ratio $f(\varphi_\beta(x))/g(f(\psi_\beta(x)))$
increases with $\beta$. Indeed, this ratio is equal to $1$ for all $x\in [0,\delta_\beta]$ and increases for  all $x\in ~]\delta_\beta,2]$ as 
\begin{align*}
    \frac{f(\varphi_\beta(x))}{g(f(\psi_\beta(x)))} = \frac{\varphi_\beta(\delta_\beta)+\varphi_\beta(\delta_\beta)/\delta_\beta \cdot(x-\delta_\beta)}{\varphi_\beta(\delta_\beta)} = 1 + \frac{(x-\delta_\beta)}{\delta_\beta},
\end{align*}
and as $\delta_\beta$ is a decreasing function of $\beta$ (see~\Cref{rem:psi_over_x}), the ratio $(x-\delta_\beta)/\delta_\beta$ is a increasing function of $\beta$ for all $x\in ~]\delta_\beta,2]$. This fact ensures that the expected variance of $g(f(\psi_\beta(x)))$ cannot increase proportionally more than the expected variance of $f(\varphi_\beta(x))$. We can therefore study the ratio of expected variances between $f(\varphi_\infty)$ and $g(f(\psi_\infty))$.

The last operation we apply is a convenient shifting and scaling. We define $h(x) = (x-g(f(\psi_\beta(1))))/(g(f(\psi_\infty(2)))-g(f(\psi_\beta(1))))$ and apply it on both $g(f(\psi_\infty))$ and $f(\varphi_\infty)$ these operations do not affect the ratio of expected variances. The resulting functions are illustrated on~\Cref{fig:psi_infinity}.

To express the resulting functions, we have to analyze the function $\psi_\beta(x)$ for $\beta$ tending to infinity for values $x\in]1,2]$. 

We recall that $\psi_\beta(x)= \phi_\beta(1)-\phi_\beta(1-x)$ and can equivalently be written as
\begin{align*}
        \psi_\beta(x) = \frac{1}{1+\exp(-\beta)}-\frac{1}{1+\exp(-\beta (1-x))}.
\end{align*} 

We have to distinguish between three cases for $(x-1)$: negative, zero, or positive. For values of $x\in]1,2]$, we have that$(1-x)<0$ and that $\lim_{\beta \to  \infty} \psi_\beta(x) = 1$, if $x=1$, we have that $\lim_{\beta \to  \infty} \psi_\beta(x) = 1/2$ and for values of $x\in[0,1[$, we have that$(1-x)>0$ and that $\lim_{\beta \to  \infty} \psi_\beta(x) = 0$. We can then write  
    \begin{align*}
        \psi_{\infty}(x) = \begin{cases}
            0 &x\in[0,1[\\
            1/2 &x=1\\
            1 &x \in ~]1,2]
        \end{cases}.
    \end{align*}
    We can now construct the corresponding $\varphi_{\infty}(x)$. We note that $\frac{\psi_{\infty}(x)}{x}$ is maximized when taking the limit to $x=1^+$ from the right: $\lim_{x\to 1^+}\frac{\psi_{\infty}(x)}{x} = 1$. It comes that $\varphi_{\infty}(x)$ can be written as 
    \begin{align*}
    \varphi_{\infty}(x) = \begin{cases}
        0 &x\in[0,1[\\
        1/2 &x=1\\
        1+(x-1) &x \in ~]1,2]
    \end{cases}.
    \end{align*}

We denote the resulting functions $h(g(f(\psi_{\infty}(x))))$ and $h(f(\varphi_\infty(x)))$ respectively as $\overline{\psi}$ and $\overline{\varphi}$. We note that they can be written quite simply as 
    \begin{align*}
        \overline{\psi}(x) = \begin{cases}
            0 &x \in [0,1]\\
            1 &x \in ~]1,2]\\
        \end{cases}, \quad \quad \textnormal{and} \quad \quad \begin{cases}
            0 &x \in [0,1]\\
            1+2(x-1) &x \in ~]1,2]\\
        \end{cases}.
    \end{align*}
\end{proof}

\begin{myremark}
\label[remark]{rem:psi_over_x}
 We illustrate the function $\psi_\beta(x)/x$ on~\Cref{fig: function_psi_beta_over_x}
and the behavior of $\delta_\beta$ and $\psi_\beta(\delta_\beta)/\delta_\beta$ for increasing values of $\beta$ on~\Cref{fig: delta_beta}.
The derivative of the function $\psi_\beta(x)/x$ is given by 
    \begin{align*}
        \frac{d}{dx}\left(\frac{\psi_\beta(x)}{x} \right) = \frac{1}{x}\left(\frac{d}{dx}\psi_\beta(x) - \frac{\psi_\beta(x)}{x}\right).
    \end{align*}
We note that it is equal to zero for values of $x\in]0,2]$ such that $\frac{d}{dx}\psi_\beta(x) = \frac{\psi_\beta(x)}{x}$. By definition of $\delta_\beta$, we have $\frac{d}{dx}\psi_\beta(\delta_\beta) = \frac{\psi_\beta(\delta_\beta)}{\delta_\beta}$.

\begin{figure}[ht]
\centering
\begin{minipage}{.465\textwidth}
  \centering
  \resizebox{\linewidth}{!}
    {
    \includegraphics[]{figures/function_psi_beta_over_x.tikz}
    }
  \captionof{figure}{Illustration of the function $\psi_\beta(x)/x$ for different values of $x$. The maximum of the function is attained for $x=\delta_\beta$.}
  \label{fig: function_psi_beta_over_x}
\end{minipage}%
\hspace{2em}
\begin{minipage}{.465\textwidth}
  \centering
  \resizebox{\linewidth}{!}
    {
    \includegraphics[]{figures/delta_beta.tikz}
    }
  \captionof{figure}{Illustration of $\delta_\beta$ and $\psi(\delta_\beta)/\delta_\beta$ as functions of $\beta$. One can observe that $\delta_\beta$ decreases with $\beta$ while $\psi(\delta_\beta)/\delta_\beta$ increases. }
  \label{fig: delta_beta}
\end{minipage}%
\end{figure}

\end{myremark}

\begin{figure}[ht]
\centering
\begin{minipage}{.465\textwidth}
  \centering
  \resizebox{\linewidth}{!}
    {
    \includegraphics[]{figures/psi_beta.tikz}
    }
  \captionof{figure}{Illustration of the function $\psi_\beta$ (in solid line) and the function $\varphi_\beta$ (in dotted line) for different values of $\beta$.}
  \label{fig: psi_beta}
\end{minipage}
\hspace{2em}
\begin{minipage}{.465\textwidth}
  \centering
  \resizebox{\linewidth}{!}
    {
    \includegraphics[]{figures/f_psi_beta.tikz}
    }
  \captionof{figure}{Illustration of the function $f(\psi_\beta)$ (in solid line) and the function $f(\varphi_\beta)$ (in dotted line) for different values of $\beta$.}
  \label{fig:f_psi_beta}
\end{minipage}%
\end{figure}

\begin{figure}[ht]
\centering
\begin{minipage}{.465\textwidth}
  \centering
  \resizebox{\linewidth}{!}
    {
    \includegraphics[]{figures/g_f_psi_beta.tikz}
    }
  \caption{Illustration of the function $g(f(\psi_\beta))$ (in solid line) and the function $f(\varphi_\beta)$ (in dotted line) for different values of $\beta$.}
  \label{fig:g_f_psi_beta}
\end{minipage}
\hspace{2em}
\begin{minipage}{.465\textwidth}
  \centering
  \resizebox{\linewidth}{!}
    {
    \includegraphics[]{figures/psi_infinity.tikz}
    }
  \caption{Illustration of the function $\overline{\psi}$ (in blue) and the function $\overline{\varphi}$ (in orange).}
  \label{fig:psi_infinity}
\end{minipage}%
\end{figure}

\section{Extension to general spaces}
\label{sec:extension}
The extend the proof technique of~\Cref{subsec:upper_bound_squared_regret} and~\Cref{subsec:ratio_exp_variances}, we first need to introduce the \emph{alignment function} $\alpha(\theta) \coloneqq \innerproduct{\pi_\star(\theta)}{\theta}$. We can define the \emph{extended logistic function} $\psi_\beta(x,\theta) \coloneqq \phi_\beta\left(\alpha(\theta)\right)-\phi_\beta\left(\alpha(\theta)-x\right)$ and note that
\begin{align*}
    \psi_\beta(\alpha(\hat{\Theta})-\innerproduct{\hat{A}}{\Theta},\hat{\Theta}) = \phi_\beta(\alpha(\hat{\Theta})) - \phi_\beta(\innerproduct{\hat{A}}{\Theta}) = \phi_\beta(\innerproduct{\hat{A}}{\hat{\Theta}}) - \phi_\beta(\innerproduct{\hat{A}}{\Theta}).
\end{align*}
Integrating the randomness of the Bernoulli process, we can write the expected regret using $\smash{\psi_\beta(x,\theta)}$: 
\begin{align*}
   \bE[\psi_\beta(\alpha(\hat{\Theta})-\innerproduct{\hat{A}}{\Theta},\hat{\Theta})]=\bE[ \phi_\beta(\innerproduct{\hat{A}}{\hat{\Theta}})-\phi_\beta(\innerproduct{\hat{A}}{\Theta})]= \bE[ \phi_\beta(\innerproduct{A^\star}{\Theta})-\phi_\beta(\innerproduct{\hat{A}}{\Theta})],
\end{align*}
where we used the fact that the pair $(A^\star,\Theta)$ and $(\hat{A},\hat{\Theta})$ are identically distributed. 

Similarly to the proof in~\Cref{subsec:upper_bound_squared_regret}, we construct a function $\varphi_\beta(x,\theta)$ as the tightest upper bound on $\psi_\beta(x,\theta)$ that satisfies the requirements of~\Cref{lemma:ratio_exp_vs_ratio_var}.

\begin{mydefinition}[Extended logistic surrogate]
    \label[definition]{def:logistic_surrogate_extended}
    We construct the \emph{extended logistic surrogate} function $\varphi_{\beta}(x,\theta)$ as the tightest upper bound on $\psi_{\beta}(x,\theta)$ such that $\varphi_{\beta}(x,\theta)/x$ is non-decreasing over $x \geq 0$ for all $\theta \in \cO$.

    Namely, let $\delta_\beta(\theta) = \arg \max_{x\in[0,1+\alpha(\theta)]} \frac{\psi_{\beta}(x,\theta)}{x}$, we define the function $\varphi_{\beta}(x,\theta)$ as
    \begin{align*}
        \varphi_{\beta}(x,\theta) = \begin{cases}
            \psi_{\beta}(x,\theta) &x \in [0,\delta_\beta(\theta)]\\
            \psi_{\beta}(\delta_\beta(\theta),\theta) + (x-\delta_\beta(\theta))\cdot \psi_{\beta}(\delta_\beta(\theta),\theta)/\delta_\beta(\theta) & x \in ~]\delta_\beta(\theta),1+\alpha(\theta)]
        \end{cases}.
    \end{align*}
\end{mydefinition}
We are now equipped to extend~\Cref{lemma:upper_bound_on_squared_exp_regret} to general action and parameter spaces.  
\begin{mylemma}
\label[lemma]{lemma:upper_bound_on_squared_exp_regret_extended}
Let the extended logistic surrogate be defined as in~\Cref{def:logistic_surrogate_extended}. Then, it holds that 
\begin{align*}
    \bE&[ \Bern{\hat{A}}{\Theta} - \Bern{\hat{A}}{\Theta}]^2\leq d \cdot \bE\left[ \bV\left[ \varphi_{\beta}(\alpha(\hat{\Theta})-\innerproduct{\hat{A}}{\Theta},\hat{\Theta}) \smash{\mid \hat{\Theta}} \right] \right].
\end{align*}
\end{mylemma}
\begin{proof}
    The proof follows closely the technique used to prove~\Cref{lemma:upper_bound_on_squared_exp_regret}. We note that conditioned on $\hat{\Theta}=\theta$, the extended logistic surrogate is a mapping from $[0,1+\delta_\beta(\theta)]$ to $[0,1]$, that $\varphi(0,\theta) = \phi_\beta(\alpha(\theta)) - \phi_\beta(\alpha(\theta)) = 0 $ and fulfills the assumptions of~\Cref{lemma:ratio_exp_vs_ratio_var}.
\end{proof}

Noting that $\bV\left[ \phi_{\beta}(\innerproduct{\hat{A}}{\Theta}) \mid \hat{\Theta} \right] = \bV \left[ \psi_{\beta} (\alpha(\hat{\Theta})-\innerproduct{\hat{A}}{\Theta},\hat{\Theta}) \mid \hat{\Theta} \right]$ and using~\Cref{lemma:lower_bound_on_mi}, we have that the information ration $\Gamma$ can be bounded by 
\begin{align*}
    \Gamma \leq d/2 \cdot \frac{\bE\left[\bV \left[ \varphi_{\beta} \left(\alpha(\hat{\Theta})- \innerproduct{\hat{A}}{\Theta},\hat{\Theta} \right) \mid \hat{\Theta} \right]\right]}{\bE\left[\bV \left[ \psi_{\beta} \left(\alpha(\hat{\Theta})- \innerproduct{\hat{A}}{\Theta},\hat{\Theta} \right) \mid \hat{\Theta} \right]\right]}.
\end{align*}
Similarly to the analysis for the $\cO \subseteq \cA$, we can derive an upper bound by studying the case $\beta \to \infty$ after applying the same preliminary transformations, $f$, $g$, and $h$ as used in~\Cref{lemma:case_infinity_is_upper_bound} on the functions $\varphi_{\beta}(x,\theta)$ and $\psi_{\beta}(x,\theta)$. To express the resulting functions $h(g(f(\psi_{\infty}(x))))$ and $h(f(\varphi_\infty(x)))$, we need to study the extended logistic function $\psi_\beta(x,\theta)$ and the corresponding extended logistic surrogate $\varphi_\beta(x,\theta)$ for $\beta$ tending to infinity. 

Starting with $\psi_\beta(x,\theta)$, we recall that $\psi_\beta(x,\theta) = \phi_\beta(\alpha(\theta)) - \phi_\beta(\alpha(\theta)-x)$ can be written as
\begin{align*}
    \psi_\beta(x,\theta) = \frac{1}{1+\exp(-\beta\alpha(\theta))}-\frac{1}{1+\exp(-\beta (\alpha(\theta)-x))}.
\end{align*}
Again, we can distinguish between three cases for $(\alpha(\theta)-x))$: negative, zero, or positive. For values of $x\in]\alpha(\theta),1+\alpha(\theta)]$, we have that$(\alpha(\theta)-x)<0$ and that $\lim_{\beta \to  \infty} \psi_\beta(x,\theta) = 1$, if $x=\alpha(\theta)$, we have that $\lim_{\beta \to  \infty} \psi_\beta(x,\theta) = 1/2$ and for values of $x\in[0,\alpha(\theta)[$, we have that$(\alpha(\theta)-x)>0$ and that $\lim_{\beta \to  \infty} \psi_\beta(x,\theta) = 0$. We can then write  
    \begin{align*}
        \psi_{\infty}(x,\theta) = \begin{cases}
            0 &x\in[0,\alpha(\theta)[\\
            1/2 &x=\alpha(\theta)\\
            1 &x \in ~]\alpha(\theta),1+\alpha(\theta)]
        \end{cases}.
    \end{align*}
    We continue and construct the corresponding $\varphi_\infty(x),\theta$.  By definition, we have that $\alpha(\theta)\leq 1$ and we note that $\frac{\psi_{\infty}(x,\theta)}{x}$ is maximized when taking the limit to $x=\alpha(\theta)^+$ from the right: $\lim_{x\to \alpha(\theta)^+}\frac{\psi_{\infty}(x)}{x} = \frac{1}{\alpha(\theta)}$. It comes that $\varphi_\infty(x)$ can be written as 
    \begin{align*}
    \varphi_\infty(x) = \begin{cases}
        0 &x\in[0,\alpha(\theta)[\\
        1/2 &x=\alpha(\theta)\\
        1+\frac{x-\alpha(\theta)}{\alpha(\theta)} &x \in ~]\alpha(\theta),1+\alpha(\theta)]
    \end{cases}.
    \end{align*}
    We can now construct the functions $\overline{\psi}(x,\theta)\coloneqq h(g(f(\psi_{\infty}(x,\theta))))$ and $\overline{\varphi}(x,\theta) \coloneqq h(f(\varphi_\infty(x,\theta)))$ for $f$, $g$, and $h$ defined as in~\Cref{lemma:case_infinity_is_upper_bound}. We note that the resulting functions can be written as 
    \begin{align}
    \label{eq:overline_psi_extended}
        \overline{\psi}(x,\theta) = \begin{cases}
            0 &x \in [0,\alpha(\theta)]\\
            1 &x \in ~]\alpha(\theta),1+\alpha(\theta)]\\
        \end{cases},  \quad \textnormal{and}  \quad \overline{\varphi}(x,\theta) = \begin{cases}
            0 &x \in [0,\alpha(\theta)]\\
            1+\frac{2(x-\alpha(\theta))}{\alpha(\theta)} &x \in ~]\alpha(\theta),1+\alpha(\theta)]\\
        \end{cases},
    \end{align}
    a similar form as the functions $\overline{\psi}$ and $\overline{\varphi}$ be defined as in~\eqref{eq:overline_psi}. 

    \begin{mylemma}
    \label[lemma]{lemma:upper_bound_on_the_ratio_at_infinity_extended}
Let $\overline{\psi}(x,\theta)$ and $\overline{\varphi}(x,\theta)$ be defined as in~\eqref{eq:overline_psi_extended}. Then, it holds that 
    \begin{align*}
 \frac{\bE \left[\bV \left[ \overline{\varphi} \left(\alpha(\hat{\Theta})- \innerproduct{\hat{A}}{\Theta}, \hat{\Theta} \right) \mid\hat{\Theta} \right]\right]}{\bE\left[\bV \left[ \overline{\psi} \left(\alpha(\hat{\Theta})- \innerproduct{\hat{A}}{\Theta},\hat{\Theta} \right) \mid\hat{\Theta} \right]\right]} \leq \frac{9}{\alpha}. 
\end{align*}
\end{mylemma}
\begin{proof}
The proof follows the one for~\Cref{lemma:upper_bound_on_the_ratio_at_infinity}. Starting with $\bE \left[\bV \left[ \overline{\psi} \left(\alpha(\hat{\Theta})\smash{-}\innerproduct{\hat{A}}{\Theta},\hat{\Theta} \right) \smash{\mid}\hat{\Theta} \right]\right]$, we note that $\overline{\psi} \left(\alpha(\hat{\Theta})-\innerproduct{\hat{A}}{\Theta},\hat{\Theta} \right)$ is equal to $1$ if $\innerproduct{\hat{A}}{\Theta}<0$ and is equal to $0$ otherwise. It can then be written as $\bE[Q(\hat{A})(1-Q(\hat{A}))]$ using the notations from the proof of~\Cref{lemma:upper_bound_on_the_ratio_at_infinity}. Similarly for $\bE \left[\bV \left[ \overline{\varphi} \left(\alpha(\hat{\Theta})-\innerproduct{\hat{A}}{\Theta},\hat{\Theta} \right) \mid\hat{\Theta} \right]\right]$, we distinguish two cases: either $\innerproduct{\hat{A}}{\Theta}\geq0$ and the function is equal to $0$ or $\innerproduct{\hat{A}}{\Theta}<0$ and the function is equal to $1-2\frac{\innerproduct{\hat{A}}{\Theta}}{\innerproduct{\hat{A}}{\hat{\Theta}}}$. Using a similar decomposition as in the proof of~\Cref{lemma:upper_bound_on_the_ratio_at_infinity}, we write $\bE \left[\bV \left[ \overline{\varphi} \left(\alpha(\hat{\Theta})-\innerproduct{\hat{A}}{\Theta},\hat{\Theta} \right) \mid\hat{\Theta} \right]\right]$ as 
\begin{align*}
    \bE&\left[I(\innerproduct{\hat{A}}{\Theta})\left(1-2\frac{\innerproduct{\hat{A}}{\Theta}}{\innerproduct{\hat{A}}{\hat{\Theta}}} \right)^2\right] -\bE\left[\bE\left[I(\innerproduct{\hat{A}}{\Theta})\left(1-2\frac{\innerproduct{\hat{A}}{\Theta}}{\innerproduct{\hat{A}}{\hat{\Theta}}} \right)\right]^2\right],
    \end{align*}
where in the second term, the outer expectation is on $\hat{A},\hat{\Theta}$, and the inner expectation is on $\Theta$. Then taking the supremum over the possible values of $1-2\frac{\innerproduct{\hat{A}}{\Theta}}{\innerproduct{\hat{A}}{\hat{\Theta}}}$ which ranges from $[1-2/\alpha,1+2/\alpha]$ we get: 
\begin{align*}
    \bE \left[\bV \left[ \overline{\varphi} \left(\alpha(\hat{\Theta})-\innerproduct{\hat{A}}{\Theta},\hat{\Theta} \right) \mid\hat{\Theta} \right]\right] &\leq \sup_{\zeta \in [1-2/\alpha,1+2/\alpha]} \bE\left[I(\innerproduct{\hat{A}}{\Theta})\zeta^2\right] - \bE\left[\bE\left[I(\innerproduct{\hat{A}}{\Theta})\zeta \right]^2\right]\\
    &= (1+2/\alpha)^2 \cdot \bE[Q(\hat{A})(1-Q(\hat{A}))].
\end{align*}
Finally, as $\alpha \in [0,1]$, we can upper bound $(1+2/\alpha)^2$ by $9 \alpha^{-2}$ and we conclude the proof.

\end{proof}


\section{Numerical simulations}
\label{sec:numerical_simulations}
To illustrate the improvement of our regret analysis compared to previous work, we performed numerical experiments on a synthetic problem. We considered a logistic bandit problem in dimension $d=10$, with time horizon $T=200$, and with parameter $\beta$ ranging between $[0.25,10]$. For both action space and parameter space, we used the closed $d$-dimensional unit sphere, $\cA = \cO = \mathbf{S}_d(0,1)$ and assumed a uniform prior distribution for the parameter $\Theta$. We computed the expected regret of the Thompson Sampling algorithm using an MCMC 
method 
and compared it to three Bayesian regret bounds that hold for continuous spaces: our \Cref{corollary_alpha_one}, \citet[Proposition 4]{russo_learning_2014-1}, and \citet[Proposition 17]{dong_performance_2019} which can be adapted to be compatible with \citet[Theorem 1]{dong_information-theoretic_2018} (see~\Cref{sec:adaptation}).\\

The results are presented in~\Cref{fig:numerical_exp}. The left sub-figure shows the evolution of the expected regret and the regret bounds for two different values of $\beta = \{2,4\}$. For both values of $\beta$, our bound is tighter throughout the entire time horizon and is less sensitive to increasing $\beta$ compared to \citet[Proposition 4]{russo_learning_2014-1} and \Cref{prop:adaptation} adapting \citet[Theorem 2]{dong_information-theoretic_2018}. This behavior is illustrated in the right sub-figure, were the different regret bounds at $t=200$ are compared for values of $\beta$ ranging between $[0.25,10]$. Our bound remains competitive across all values of $\beta$ and is tighter than \citet[Proposition 17]{dong_performance_2019} for values of $\beta\geq2$. Importantly, we observe that while our bound increases only logarithmically, both \citet[Proposition 4]{russo_learning_2014-1} and  \citet[Proposition 17]{dong_performance_2019} increase exponentially with $\beta$. 
\begin{figure}[ht]
    \centering
    \scalebox{0.82}{
    \includegraphics[]{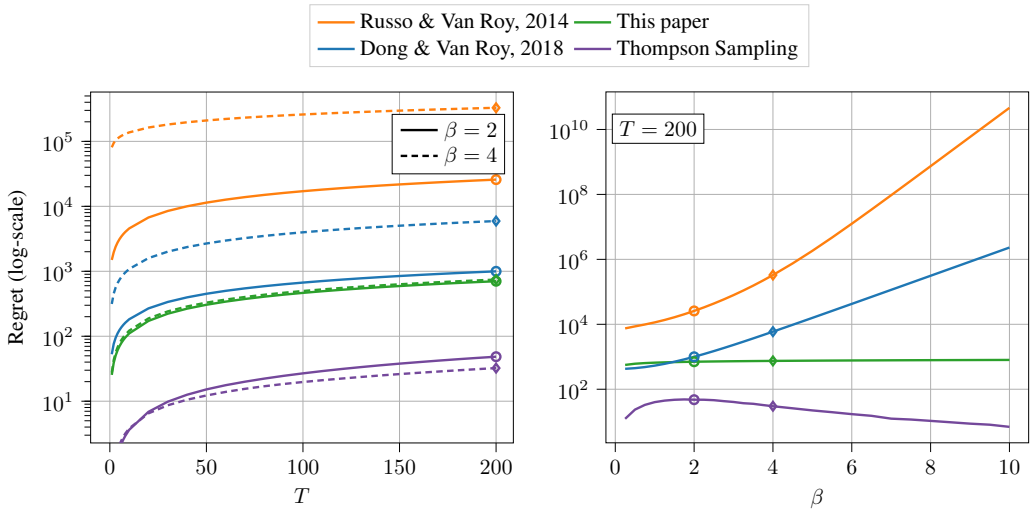}
    }
    \caption{Comparison of Bayesian regret bounds for the logistic bandit setting ($d=10$, $T=200$, $\beta \in [0.25,10]$, $\cA = \cO = \mathbf{S}_d(0,1)$, uniform prior on $\Theta$). The left sub-figure compares the evolution of the bounds and the expected regret for $\beta \in \{2,4\}$. The right sub-figure illustrates the behavior of the bounds and the expected regret at time $T=200$ for values of $\beta$ ranging in $[0.25,10]$. 
    }
    \label{fig:numerical_exp}
\end{figure}

\section{Adapting information ratio bounds from linear to logistic bandits}
\label{sec:adaptation}
This section presents one way to rigorously adapt the TS information bound from \citet[Proposition 17]{dong_performance_2019} to the \emph{one-step compressed Thompson Sampling} such that it can be combined with the regret bound from \citet[Theorem 1]{dong_information-theoretic_2018}.

\begin{myproposition}[{\citet[Proposition 17]{dong_performance_2019}} extended to \emph{one-step compressed TS}]
    \label[proposition]{prop:adaptation}

    For all $\beta>0$, under the logistic bandit setting with logistic function $\phi_{\beta}(x)$,  letting $\tilde{\Theta}_t^\star$ and $\tilde{\Theta}_t$ satisfy the conditions in~\citet[Proposition 2]{dong_information-theoretic_2018}, the \emph{one-step compressed TS} information ratio is bounded as 
    \begin{align*}
     \tilde{\Gamma}(\tilde{\Theta}_t^\star,\tilde{\Theta}_t) \coloneqq \frac{\bE_t[ R(\pi_\star(\tilde{\Theta}_t^\star),\Theta) - R(\pi_\star(\tilde{\Theta}_t),\Theta)]^2}{\mi_t(\tilde{\Theta}_t^\star; R(\pi_\star(\tilde{\Theta}_t),\Theta),\tilde{\Theta}_t)} \leq  d \frac{(1+\exp(\beta))^4}{32 \exp(\beta)^2}.   
    \end{align*}
\end{myproposition}

\begin{proof}
We combine the proof techniques from~\citet[Proposition 17]{dong_performance_2019} and~\cite[Proposition 3]{dong_information-theoretic_2018}. To simplify the exposure we will omit the subscript $t$ and reuse the notations introduced in~\Cref{sec:analysis}. With these notations, the one-step compressed information ratio can be written as 
\begin{align*}
    \frac{\bE[ \Bern{\pi_\star(\tilde{\Theta}^\star)}{\Theta} - \Bern{\pi_\star(\tilde{\Theta})}{\Theta}]^2}{\mi(\tilde{\Theta}^\star; \Bern{\pi_\star(\tilde{\Theta})}{\Theta},\tilde{\Theta})}.
\end{align*}

Similarly to~\citet[Proposition 17]{dong_performance_2019}, we let $L_1=\inf_{x\in[-1,1]} |\phi_\beta(x)'| = \frac{\beta \exp(\beta)}{(1+\exp(\beta))^2}$ and $L_2=\sup_{x\in[-1,1]} |\phi_\beta(x)'| = \frac{\beta}{4}$. We will start by analyzing the numerator which we write as

\begin{align*}
    \bE[ \Bern{\pi_\star(\tilde{\Theta}^\star)}{\Theta}\smash{-}  \Bern{\pi_\star(\tilde{\Theta})}{\Theta}]^2 &=  \bE[ \PhiBeta{\pi_\star(\tilde{\Theta}^\star)}{\Theta} - \PhiBeta{\pi_\star(\tilde{\Theta})}{\Theta}]^2\\
    &\leq L_2^2  \cdot \bE[ \innerproduct{\pi_\star(\tilde{\Theta}^\star)}{\Theta} - \innerproduct{\pi_\star(\tilde{\Theta})}{\Theta}]^2, 
\end{align*}
where the inequality follows from $\frac{\phi_\beta(x_1)-\phi_\beta(x_2)}{x_1-x_2} \leq \sup_{x\in[-1,1]} |\phi_\beta(x)'|$ for all $x_1,x_2 \in [-1,1]$. \\

We then focus on the denominator, which we lower bound similarly as in~\cite[Lemma 2]{dong_information-theoretic_2018}. Introducing the notation $R(\tilde{\theta}) = \Bern{\pi_\star(\tilde{\theta})}{\Theta}$, we can write

\begin{align*}
\mi(\tilde{\Theta}^\star; \Bern{\pi_\star(\tilde{\Theta})}{\Theta},\tilde{\Theta}) &\overset{(i)}{=} \mi(\tilde{\Theta}^\star;\tilde{\Theta})+\mi(\tilde{\Theta}^\star; \Bern{\pi_\star(\tilde{\Theta})}{\Theta}|\tilde{\Theta})\\
&\overset{(j)}{=} \sum_{\tilde{\theta} \in \cO_\epsilon} \sum_{\tilde{\theta}^\star \in \cO_\epsilon} \bP[\tilde{\Theta}\smash{=} \tilde{\theta}]  \bP[\tilde{\Theta}^\star\smash{=}\tilde{\theta}^\star] \KL{\bP_{R(\tilde{\theta})|\tilde{\Theta}^\star = \tilde{\theta}^\star}}{\bP_{R(\tilde{\theta})}}  \\
&\overset{(k)}{\geq} \sum_{\tilde{\theta} \in \cO_\epsilon} \sum_{\tilde{\theta}^\star \in \cO_\epsilon} \bP[\tilde{\Theta} \smash{=}\tilde{\theta}]  \bP[\tilde{\Theta}^\star\smash{=} \tilde{\theta}^\star] 2 \big(\bE[R(\tilde{\theta})|\tilde{\Theta}^\star\smash{=} \tilde{\theta}^\star]-\bE[R(\tilde{\theta})]\big)^2,\\
\end{align*}
where $\cO_\epsilon$ is the set of values for the random variables $\tilde{\theta}^\star$ and  $\tilde{\Theta}_t$ as defined in~\citet[Proposition 2]{dong_information-theoretic_2018}. The inequality (i) follows from the chain-rule; (j) follows from $\tilde{\Theta}_t^\star$and $\tilde{\Theta}_t$ being independent as they satisfy the conditions in~\citet[Proposition 2]{dong_information-theoretic_2018}; (k) is obtained using the Donsker-Varadhan inequality~\citep[Theorem~5.2.1]{gray_entropy_2013} as in~\citet[Lemma 3]{russo_information-theoretic_2015}. We then continue to lower bound
\begin{align*}
    \small{(k)}&=2\sum_{\tilde{\theta} \in \cO_\epsilon} \sum_{\tilde{\theta}^\star \in \cO_\epsilon} \bP[\tilde{\Theta} \smash{=}\tilde{\theta}]  \bP[\tilde{\Theta}^\star\smash{=} \tilde{\theta}^\star]  \big(\bE[\PhiBeta{\pi_\star(\tilde{\theta})}{\Theta}|\tilde{\Theta}^\star\smash{=} \tilde{\theta}^\star]-\bE[\PhiBeta{\pi_\star(\tilde{\theta})}{\Theta}]\big)^2\\
    &\geq 2 L_1^2\sum_{\tilde{\theta} \in \cO_\epsilon} \sum_{\tilde{\theta}^\star \in \cO_\epsilon} \bP[\tilde{\Theta} \smash{=}\tilde{\theta}]  \bP[\tilde{\Theta}^\star\smash{=} \tilde{\theta}^\star]  \big(\bE[\innerproduct{\pi_\star(\tilde{\theta})}{\Theta}|\tilde{\Theta}^\star\smash{=} \tilde{\theta}^\star]-\bE[\innerproduct{\pi_\star(\tilde{\theta})}{\Theta}]\big)^2,
\end{align*}
using the fact that $\frac{\phi_\beta(x_1)-\phi_\beta(x_2)}{x_1-x_2} \geq \inf_{x\in[-1,1]} |\phi_\beta(x)'|$ for all $x_1,x_2 \in [-1,1]$.\\
Combining the inequality on the numerator and the denominator, we get that
\begin{align}
\label{eq:adaptation}
    \tilde{\Gamma}(\tilde{\Theta}_t^\star,\tilde{\Theta}_t) \leq \frac{L_2^2}{2 L_1^2} \frac{\Big(\sum_{\theta' \in \cO_\epsilon}  \bP[\tilde{\Theta} \smash{=}\theta']    \big(\bE[\innerproduct{\pi_\star(\theta')}{\Theta}|\tilde{\Theta}^\star\smash{=} \theta']-\bE[\innerproduct{\pi_\star(\theta')}{\Theta}]\big)\Big)^2}{\sum_{\tilde{\theta} \in \cO_\epsilon} \sum_{\tilde{\theta}^\star \in \cO_\epsilon} \bP[\tilde{\Theta} \smash{=}\tilde{\theta}]  \bP[\tilde{\Theta}^\star\smash{=} \tilde{\theta}^\star]  \big(\bE[\innerproduct{\pi_\star(\tilde{\theta})}{\Theta}|\tilde{\Theta}^\star\smash{=} \tilde{\theta}^\star]-\bE[\innerproduct{\pi_\star(\tilde{\theta})}{\Theta}]\big)^2}.
\end{align}
Without loss of generality, we write $\cO_\epsilon = \{\tilde{\theta}_1,\ldots,\tilde{\theta}_{|\cO_\epsilon|} \}$. Now, we define the random matrix $M \in \bR^{|\cO_\epsilon|\times|\cO_\epsilon|}$ where for each $i,j\in \{1,\ldots,|\cO_\epsilon|\}$ the corresponding entry is given by 
\begin{align*}
    M_{i,j} = \sqrt{\bP[\tilde{\Theta}^\star\smash{=} \tilde{\theta}_i]  \bP[\tilde{\Theta}^\star\smash{=} \tilde{\theta}_j]} \big(\bE[\innerproduct{\pi_\star(\tilde{\theta}_i)}{\Theta}|\tilde{\Theta}^\star\smash{=} \tilde{\theta}_j]-\bE[\innerproduct{\pi_\star(\tilde{\theta}_i)}{\Theta}]\big).
\end{align*}
We note that we can rewrite \cref{eq:adaptation} using the trace and the Frobenius norm of the matrix $M$, as 
\begin{align*}
    \tilde{\Gamma}(\tilde{\Theta}_t^\star,\tilde{\Theta}_t) \leq \frac{L_2^2}{2 L_1^2} \frac{\textnormal{Trace}(M)^2}{||M||_F} \leq \frac{L_2^2}{2 L_1^2} \textnormal{Rank(M)}, 
\end{align*}
where the last inequality is obtained from \cite[Fact 10]{russo_information-theoretic_2015}. 

The proof concludes showing the rank of the matrix $M$ is upper bounded by $d$. 
    For the sake of brevity, we define $\bar{\Theta} \coloneqq \bE[\Theta)]$ and $Q_i \coloneqq \bE[\Theta|\tilde{\Theta}^\star\smash{=}\tilde{\theta}_i]$ for all $i\in \{1,\ldots,|\cO_\epsilon| \}$. We then have $\bE[\innerproduct{\pi_\star(\tilde{\theta}_i)}{\Theta}|\tilde{\Theta}^\star\smash{=} \tilde{\theta}_j] = \innerproduct{\pi_\star(\tilde{\theta}_i)}{Q_j}$  and $ \bE[\innerproduct{\pi_\star(\tilde{\theta}_i)}{\Theta}] = \innerproduct{\pi_\star(\tilde{\theta}_i)}{\bar{\Theta}}$ . Since the inner product is linear, we can rewrite each entry $M_{i,j}$ of the matrix $M$ as
    \begin{align*}
        \sqrt{\bP[\tilde{\Theta}^\star\smash{=} \tilde{\theta}_i]\bP[\tilde{\Theta}^\star\smash{=} \tilde{\theta}_j]}\innerproduct{\pi_\star(\tilde{\theta}_i)}{Q_j-\bar{\Theta}}.
    \end{align*}
    Equivalently, the matrix $M$ can be written as
    \begin{align*}
        \begin{bmatrix}
        \sqrt{\bP[\tilde{\Theta}^\star\smash{=} \tilde{\theta}_1]}\pi_\star(\tilde{\theta}_i)\\
        \vdots\\
        \sqrt{\bP[\tilde{\Theta}^\star\smash{=} \tilde{\theta}_{|\cO_\epsilon|}]}\pi_\star(\tilde{\theta}_i)
        \end{bmatrix}
        \begin{bmatrix}
        \sqrt{\bP[\tilde{\Theta}^\star\smash{=} \tilde{\theta}_1]}(Q_1-\bar{\Theta})&
        \ldots &&
        \sqrt{\bP[\tilde{\Theta}^\star\smash{=} \tilde{\theta}_{|\cO_\epsilon|}]}(Q_{|\cO_\epsilon|}-\bar{\Theta})
        \end{bmatrix}.
    \end{align*}
    This rewriting highlights that $M$ can be written as the product of a $|\cO_\epsilon|$ by $d$ matrix and a $d$ by $|\cO_\epsilon|$ matrix and therefore has a rank lower or equal than $\min(d,|\cO_\epsilon|)$.
\end{proof}

\section{Regarding the gaps in previous literature}
\label{sec:arguments_gaps_in_dong}

In~\Cref{sec:introduction}, we mention that the main results of~\cite{dong_performance_2019} are incomplete because of two shortcomings. The first one concerns a gap in their analysis of the Thompson Sampling information ratio for values of $\beta>2$. The second one regards a mistake in their regret analysis, which combines incompatible results. We elaborate on both shortcomings below.

\paragraph{Regarding the first shortcoming} 
We identified an issue in the proof of~\citet[Theorem 5]{dong_performance_2019} at the end of the proof on page 20, where the inequality $\chi > \xi > 0.1 \lambda$ is stated without justification. This inequality plays a crucial role in deriving their bound on the Thompson Sampling information ratio, but the only evidence provided is~\citet[Figure 3]{dong_performance_2019}, which illustrates the functions $\chi(\lambda,\beta)$ and $\xi(\lambda,\beta)$ for the specific case of $\beta = 2$. While this figure suggests that the inequality holds for $\beta = 2$, it cannot be used to conclude that the inequality holds in general for $\beta \geq 2$. Additionally, we note that the computation of $\chi(\lambda,\beta)$ and $\xi(\lambda,\beta)$ for given values of $\lambda$ and $\beta$ is highly intricate, and despite our best efforts, we were unable to reproduce~\citet[Figure 3]{dong_performance_2019}.

\paragraph{Regarding the second shortcoming}  As mentioned earlier, the regret analysis in~\citet[Theorems 1 and 5]{dong_performance_2019} combines incompatible results. Specifically, the paper uses a uniform bound on the information ratio of the ``standard" Thompson Sampling (provided in~\citet[Appendix B, Eq. (18)]{dong_performance_2019}) together with~\citet[Theorem 1]{dong_information-theoretic_2018}, which requires a uniform bound on the information ratio of the \emph{one-step compressed Thompson Sampling}. This inconsistency invalidates the regret bounds derived in~\citet[Theorems 1 and 5]{dong_performance_2019}. We emphasize that the problem is ``hidden" in~\citet[Proposition 9]{dong_performance_2019}, which incorrectly restates~\citet[Theorem 4]{dong_information-theoretic_2018}. It is important to note that there is no straightforward way to extend the result from~\citet[Theorem 4]{dong_information-theoretic_2018} such that it the regret bound works with bounds on the ``standard" Thompson Sampling (and not with \emph{one-step compressed Thompson Sampling}), as this compressed version of Thompson Sampling is central to~\citet[Theorem 4]{dong_information-theoretic_2018}.  

While it is possible to use~\citet[Proposition 1]{dong_information-theoretic_2018} directly with a uniform bound on the ``standard" Thompson Sampling information ratio, this approach is limited because~\citet[Proposition 1]{dong_information-theoretic_2018} provides a loose bound. Specifically, this bound depends on the cardinality of the parameter space $\Theta$ through the entropy $\mathsf{H}(\Theta^\star)$ (or on the cardinality of the action space through $\mathsf{H}(A^\star)$ in the original version~\citet[Proposition 1]{russo_information-theoretic_2015}). This issue is highlighted in~\cite{dong_information-theoretic_2018} at the end of Section 3, and serves as a motivation for the introduction of the \emph{one-step-compressed Thompson Sampling} regret analysis in the paper. Combining~\citet[Proposition 1]{dong_information-theoretic_2018} with our bound on the Thompson Sampling information ratio,~\Cref{prop:info_ratio_logistic_bandits}, results in a regret bound of the order $O(\sqrt{d T \log(|\mathcal{O}|)})$, which becomes vacuous for infinite or continuous parameter spaces.
        

\section{Constructing $\pi_\star$ as a one-to-one mapping}
\label{sec:constructing_the_one_to_one_mapping}

If the mapping $\pi_\star(\theta)$ is not one-to-one, it could either be that a particular parameter is optimal for several actions or that a particular action is optimal for several parameters.

In the first case, for example, if actions $a_1, a_2 \in \mathcal{A}$ are optimal for the same parameter $\theta_1 \in \mathcal{O}$, then it implies that $\mathbb{E}[R(a_1, \theta_1)] = \mathbb{E}[R(a_2, \theta_1)] \geq \mathbb{E}[R(a, \theta_1)]$ for all $a \in \mathcal{A}$ with $a \neq a_1$ and $a \neq a_2$. In this scenario, we can arbitrarily set $\pi_\star(\theta_1) = a_1$ without affecting the regret of Thompson Sampling, as $\mathbb{E}[R(a_1, \theta_1)] = \mathbb{E}[R(a_2, \theta_1)]$.

In the second case, if an action $a_1 \in \mathcal{A}$ is optimal for multiple parameters, say $\theta_1, \theta_2 \in \mathcal{O}$, we can artificially construct an \emph{action label} set $\mathcal{A}'$ such that two labels, $a_1', a_2' \in \mathcal{A}'$, are associated with $a_1$. For all other actions in $\mathcal{A} \setminus \{a_1\}$, there is a corresponding label in $\mathcal{A}'$. We denote the mapping between action labels and their corresponding actions using the function $\rho: \mathcal{A}' \to \mathcal{A}$.
We can construct a function $\pi_\star: \mathcal{O} \to \mathcal{A}'$ such that $\pi_\star$ is a one-to-one mapping between the parameters $\mathcal{O}$ and the action labels $\mathcal{A}'$. We define the \emph{optimal action label} as ${A^\star}' = \pi_\star(\Theta)$ and the \emph{Thompson Sampling action label} as ${A_t}' = \pi_\star(\Theta_t)$. 
This artificial construction, illustrated in~\Cref{fig:construction_of_one_to_one}, is intended solely for the purposes of our regret analysis and has no impact on the regret of Thompson Sampling. The instant regret of Thompson Sampling at time $t \in \{1, \ldots, T\}$ remains $R(A^\star, \Theta) - R(A_t, \Theta)$, where $A^\star = \rho({A^\star}')$ is still the optimal action for $\Theta$, and $A_t = \rho(A_t')$ is the action selected by Thompson Sampling for $\Theta_t$. In this context, the Thompson Sampling information ratio would be adapted and defined as: \begin{equation*} \Gamma_t \coloneqq \frac{\bE_t[R(A^\star, \Theta) - R(A_t, \Theta)]^2}{\mi_t({A^\star}'; R(A_t, \Theta), A_t')}, \end{equation*} representing the ratio between the current squared regret and the information gathered about the optimal action label. One can verify that the analysis of the information ratio in~\Cref{sec:analysis},~\Cref{sec:ratio_exp_variances_appendix}, and~\Cref{sec:extension}, proceeds the same with this adapted definition and leads to the same upper bound.

\begin{figure}[ht]
    \centering
    \includegraphics[]{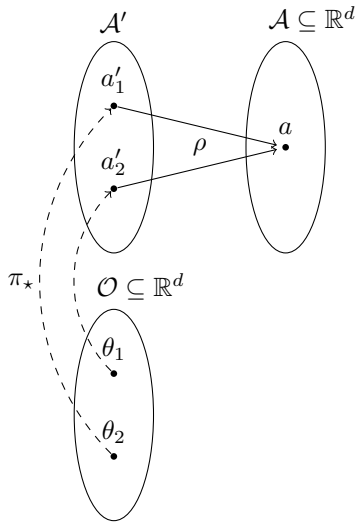}
    \caption{Illustration of the artificial construction of the action label set $\cA'$.}
    \label{fig:construction_of_one_to_one}
\end{figure}

\end{document}